\documentclass{article}

 \usepackage[preprint]{neurips_2026}

\usepackage[utf8]{inputenc} 
\usepackage[T1]{fontenc}    
\usepackage{hyperref}       
\usepackage{url}            
\usepackage{booktabs}       
\usepackage{amsfonts}       
\usepackage{nicefrac}       
\usepackage{microtype}      
\usepackage{xcolor}         
\usepackage{siunitx}

\usepackage{graphicx}
\hypersetup{hidelinks}
\usepackage{multirow} 
\usepackage{tikz-cd}
\usepackage{algorithm}
\usepackage{algpseudocodex}
\usepackage[font=small,labelfont=rm]{subcaption}
\usepackage[font=small,margin=10pt]{caption}
\usepackage{amsmath,amssymb,amsfonts}
\usepackage{bm,mathtools}
\usepackage{amsthm}

\theoremstyle{plain}
\newtheorem{theorem}{Theorem}[section] 
\newtheorem{proposition}[theorem]{Proposition}

\newtheorem{lemma}[theorem]{Lemma}

\newtheorem{definition}{Definition}

\newtheorem{remark}{Remark}

\def\expect{{\mathbb{E}}}

\def\Prob{\mathbb{P}}

\def\Pr{\text{Pr}}

\newcommand{\randvar}[1]{\mathbf{#1}}

\def\wasserstein{{\mathbb{W}}}

\def\indicator{{\mathbf{1}}}

\def\det{{\mathrm{det}}}

\newcommand{\norm}[1]{\left\lVert#1\right\rVert}

\def\realNum{{\mathbb{R}}}

\def\natNum{\mathbb{N}}

\def\sB{{\mathcal{B}}}

\def\sH{{\mathcal{H}}}

\def\sL{{\mathcal{L}}}

\def\sN{{\mathcal{N}}}

\def\sP{{\mathcal{P}}}

\def\sR{{\mathcal{R}}}

\def\sV{{\mathcal{V}}}

\def\sX{{\mathcal{X}}}
\def\sY{{\mathcal{Y}}}

\def\bsR{{\bm{\mathcal{R}}}}
\def\bsC{{\bm{\mathcal{C}}}}

\def\bsV{{\bm{\mathcal{V}}}}

\def\vlambda{{{\lambda}}}

\def\vpi{{{\pi}}}

\def\vp{{{p}}}

\newcommand{\evlambda}[1]{{\vlambda^{(#1)}}}

\newcommand{\evpi}[1]{{\vpi^{(#1)}}}

\newcommand{\evp}[1]{{\vp^{(#1)}}}

\DeclarePairedDelimiter\floor{\lfloor}{\rfloor}

\newcommand{\smoothed}[1]{\Tilde{#1}}

\newcommand{\ubar}[1]{\underline{#1}}

\newcommand{\EF}[1]{\textcolor{orange}{[EF: #1]}}

\newcommand{\LL}[1]{\textcolor{violet}{[LL: #1]}}

\newif\ifhighlightchanges
\highlightchangesfalse

\newcommand{\review}[1]{%
    \ifhighlightchanges
        \textcolor{blue}{#1}%
    \else
            #1%
    \fi
}

\title{Clustered Randomized Smoothing \\ for Stochastic Prediction Functions}

\author{%
    \textbf{Eduardo Figueiredo}$^{\star,1}$ \quad
    \textbf{Frederik Mathiesen}$^{\star,1}$ \quad \textbf{Julian Schumann}$^{2}$ \\ \textbf{Jens Kober}$^{3}$ \quad \textbf{Arkady Zgonnikov}$^{2}$ \quad \textbf{Luca Laurenti}$^{1,4}$\\
    $^{1}$Delft Center for Systems and Control, Delft University of Technology\\ $^{2}$Cognitive Robotics, Delft University of Technology \quad $^{3}$University of Stuttgart \quad $^{4}$AI4I\\
    \texttt{\{e.figueiredo, f.b.mathiesen, j.f.schumann, a.zgonnikov, l.laurenti\}@tudelft.nl}\\
}

\begin{document}

\maketitle

\begin{abstract}

Modern stochastic predictors can model rich, multi-modal outcome distributions. However, this expressive power comes with challenges in ensuring robust predictions—a critical requirement in safety-critical domains. Randomized smoothing is a leading technique for improving robustness, particularly against adversarial perturbations. Yet, in stochastic multi-modal regression settings, randomized smoothing often fails due to mode collapse, yielding averaged predictions that do not reflect the underlying distribution. To address this limitation, we propose clustered $\alpha$-smoothing, a framework that (1) partitions noisy samples using an arbitrary clustering algorithm, (2) applies $\alpha$-smoothing locally within each cluster, and (3) combines the resulting predictions into a mixture distribution. By interpreting the smoothing distribution as a mixture of $\alpha$-smoothers, we derive a lower bound on the probability that the smoothed prediction lies within a union of compact regions corresponding to distinct modes. We empirically evaluate our framework on two benchmarks, demonstrating substantial improvements over state-of-the-art methods. In stochastic trajectory prediction on a driving simulator dataset, our approach achieves, on average, a \review{$27\%$} lower Wasserstein distance to the ground-truth distribution compared to $\alpha$-smoothing. In quadrotor control, where modes correspond to distinct feasible paths to a target, our method reduces the collision rate by $81\%$ relative to the state-of-the-art randomized smoothing.
\end{abstract}

\section{Introduction}
Stochastic predictors, such as variational autoencoders \cite{diederik2019introduction}, normalizing flows \cite{Papamakarios2021normalizing}, or Bayesian neural networks \cite{DBLP:conf/iclr/WuNMTHG19}, are capable of modeling rich, multi-modal outcome distributions \cite{lu2020universal, foong2020expressiveness}. However, this expressive power comes with challenges in ensuring robust predictions -- a fundamental requirement in safety-critical applications such as electrical grids, healthcare, robotics, or autonomous driving, where this type of brittle behavior can have catastrophic consequences~\cite{deng2016false, michelmore2020uncertainty, meyers2023safety, schumann2025realistic}. A particular concern is vulnerability to adversarial attacks~\cite{goodfellow2014explaining, madry2017towards, salman2019provably, qin2022boosting, chen2024diffusion}, in which carefully crafted, small perturbations of the input cause significant deviations in the network's output.
To address this issue, a wide range of methods has been developed for formally verifying desired input–output properties of neural networks, including mixed-integer programming~\cite{katz2017reluplex}, abstract interpretation~\cite{gehr2018ai2}, linear relaxations with branch-and-bound~\cite{zhang2018efficient, xu2020automatic}, satisfiability modulo theory solvers~\cite{wu2024marabou, duong2025neuralsat}, and reachability analysis~\cite{tran2020nnv, adams2023bnn}. Despite their strong guarantees, these methods are often computationally expensive, which limits their practical applicability.

\emph{Randomized smoothing} is a scalable alternative for certifying robustness \cite{liu2018towards, lecuyer2019certified, cohen2019certified}. Given a base predictor, randomized smoothing injects noise into the input and averages the resulting predictions, yielding a smoothed predictor with certified robustness guarantees. While most prior work focuses on classification \cite{liu2018towards, lecuyer2019certified, cohen2019certified}, recent efforts extend these ideas to regression  \cite{chiang2020detection,rekavandi2024rs,rekavandi2024certified,schumann2026evaluating}. In this setting, given a base predictor $h : \realNum^d \to \realNum^q$ and a Gaussian noise $\bm{\varepsilon} \sim \sN(\bm{0}, \sigma^2 I)$ with $\sigma^2 > 0$, the smoothed predictor is commonly defined as \review{(some variant of)} $\sH(x) = \expect_{\bm{\varepsilon}} \left[ h(x + \bm{\varepsilon}) \right]$.
However, this approach is fundamentally limited for stochastic neural network predictors. In these models, predictions are often inherently multi-modal \cite{diederik2019introduction, Papamakarios2021normalizing, DBLP:conf/iclr/WuNMTHG19}, yet randomized smoothing collapses this structure by averaging over noisy \review{outputs}, yielding a unimodal and potentially uninformative estimate (Figure~\ref{fig:trajectory-prediction-example-intro}). This limitation is particularly problematic in safety-critical applications where multi-modality is essential, such as trajectory prediction~\cite{nayakanti_wayformer_2023,meszaros_trajflow_2024,bae2024singulartrajectory,schumann2025realistic} 
or failure prediction~\cite{rivas2022predictions, zhuang20223prognostic}.

\begin{figure*}
    \centering
    \begin{subfigure}{0.3\textwidth}
        \centering
        \includegraphics[width=0.95\textwidth, page=1]{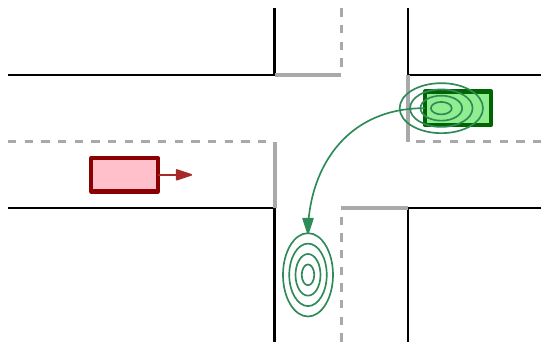}
        \caption{base predictor}
    \end{subfigure}%
    \hfill
    \begin{subfigure}{0.3\textwidth}
        \centering
        \includegraphics[width=0.95\textwidth, page=2]{Figures/trajectory_prediction_lgap.pdf}
        \caption{randomized smoothing}
    \end{subfigure}%
    \hfill
    \begin{subfigure}{0.3\textwidth}
        \centering
        \includegraphics[width=0.95\textwidth, page=3]{Figures/trajectory_prediction_lgap.pdf}
        \caption{our approach}
    \end{subfigure}
    \caption{Conceptual illustration of mode collapse in randomized smoothing. In (a), an ego vehicle (red) is predicting the trajectory of another vehicle (green) at a cross junction. The stochastic predictor outputs a distribution with two modes: \textit{turning left} before the ego vehicle passes the intersection, or \textit{waiting} for it to pass before initiating the turn. 
    In (b), current randomized smoothing methods (e.g.,~\cite{rekavandi2024certified}) would produce an averaged prediction, failing to capture any of the two modes. 
    Instead, in (c), our approach is to smooth each mode individually and combine as a mixture model, which preserves the modes corresponding to the two tactical behaviors. Sets $\sR$ represent the sets that can be certified with each method. 
    }
    \label{fig:trajectory-prediction-example-intro}
    \vspace{-0.5cm}
\end{figure*}

To address this limitation and capture the multi-modal behavior of the base predictor, we propose \emph{clustered $\alpha$-smoothing}. Given an input $x$ and a set of noise samples $\{\varepsilon_1, \ldots, \varepsilon_N\}$, our method (i) clusters the prediction samples $\{h_\randvar{w}(x+\varepsilon_i), \ldots, h_\randvar{w}(x+\varepsilon_N)\}$, (ii) trims $\alpha$ outlier samples per cluster, and (iii) smooths locally within each cluster. The clusters are then combined according to a mixture distribution. By combining the modes/clusters as mixture model rather than averaging, we avoid mode collapse, and thus, the limitation of the previous state-of-the-art. Further, the per-mode $\alpha$-trimming allows greater control over outliers.
We prove by classical probability theory results and statistical methods that the output of the smoothed predictor belongs to a coverage region $\sR$ with high probability for all inputs in a radius $r$. The guaranteed coverage region $\sR$ is provided as the union of the cluster-wise coverage regions $\sR_m$, \review{which are regions containing the samples of each cluster with high probability (see Figure \ref{fig:trajectory-prediction-example-intro})}, and the probability of the prediction belonging to the coverage region is provided both individually and jointly for all clusters. Our framework is agnostic to the clustering algorithm and provides flexibility in how to determine the number of clusters and sample assignment. 
We demonstrate the framework on two benchmarks: (1) stochastic multi-modal trajectory prediction in traffic (Figure~\ref{fig:trajectory-prediction-example-intro}), a key prerequisite for safe and efficient automated driving, and (2) robustification of a multi-modal deep RL quadrotor controller, where the framework helps improve robustness while preserving multiple feasible navigation strategies around obstacles.

In summary, our main contributions are:
(i) we introduce a framework for randomized smoothing for multi-modal stochastic predictors via clustering and $\alpha$-trimming, (ii) we prove that the output of the clustered $\alpha$-smoothed predictor with high probability falls within a small coverage region per cluster for all inputs in a radius, and (iii) we demonstrate the effectiveness of our framework in benchmarks from stochastic trajectory prediction in traffic and a multi-modal RL controller for a quadrotor, showing improved performance compared to state-of-the-art.




\section{Preliminaries on randomized smoothing}\label{section:preliminaries}
\paragraph*{Notation.}
For a vector $x \in \realNum^d$,  we denote by $x^{(i)}$ its $i$-element. Given $N\in\natNum_{>0}$, the set $\{1,\dots,N\}$ is denoted by $[N]$. $I$ denotes the identity matrix. The Pontryagin difference between sets $A, B$ is denoted by $A \ominus B$. For a set $\sX \subseteq \realNum^d$, the indicator function for $\sX$  is denoted as $\indicator_{\sX}(x) \coloneqq 1 \text{ if } x \in \sX \text{; otherwise } 0$.
Given a Borel measurable space $\sX \subseteq \realNum^d$, we denote by $\sB(\sX)$ the Borel sigma algebra over $\sX$ and by $\sP(\sX)$ the set of probability distributions on $\sX$.
For a random variable $\randvar{x}$ taking values in $\sX$, $\randvar{x} \sim \Prob_\randvar{x} \in \sP(\sX)$ represents the probability measure associated to $\randvar{x}$.
Let $f_{n, k}(p) = \binom{n}{k}p^k(1-p)^{n-k}$ denote throughout the paper the probability mass function of a binomial random variable with parameters $n$ and $k$. For a vector $x \in \sX$, the $k$-th order statistic of $x$ is denoted by $x_{(k)}$.
Further, we define a stochastic predictor as follows.

\begin{definition}[Stochastic Predictor]
    Consider a deterministic function $h_w : \sX \to \sY$ with $\sX \subseteq \realNum^d$ and $\sY \subset \realNum^q$, where $w \in \realNum^m$ is a parameter vector. Then, for $x\in \sX$ and a possibly input dependent random vector $\randvar{w} \sim \Prob_{\randvar{w}\mid x}$ on $\realNum^m$, the stochastic prediction induced by $h_w$ and $\randvar{w}$ at $x$ is the random variable $h_\randvar{w}(x)$.
\end{definition}
Our definition of stochastic predictor is sufficiently general to encompass important stochastic neural models such as Bayesian neural networks (BNNs) \cite{DBLP:conf/iclr/WuNMTHG19}, variational autoencoders (VAEs) \cite{diederik2019introduction}, and normalizing flows \cite{Papamakarios2021normalizing}. 
In what follows, for simplicity, we assume that the distribution of $\randvar{w}$ does not depend on the input $x$. Accordingly, we write $\Prob_{\randvar{w}}$ instead of $\Prob_{\randvar{w}\mid x}$. However, our methods in Sections~\ref{section:randomized-smoothing} and \ref{section:algorithm} also apply to the case where $\randvar{w}$ depends on the input $x$. 

\paragraph*{Randomized smoothing.}
Although the literature on randomized smoothing has been extensively developed for classification problems \cite{cohen2019certified,li2019certified,yang2020randomized,levine2020wasserstein}, recent work has recently sparked interest in regression settings \cite{chiang2020detection,rekavandi2024rs,rekavandi2024certified}. Given a stochastic predictor $h_\randvar{w}$ and  $x\in\sX$, randomized smoothing can be viewed as the transformation $\sH(x) = \expect_{\bm{\varepsilon}}[h_\randvar{w}(x + \bm{\varepsilon})]$, i.e., the original stochastic function is replaced by its expectation w.r.t.\ noisy perturbations around $x$. In practice, computing $\sH(x)$ in closed-form is often infeasible~\cite{girard2002gaussian}, thus commonly replaced by its sampled approximation $\sH_N(x) = \frac{1}{N} \sum_{i=1}^{N} h_\randvar{w}(x + \varepsilon_i)$, where $\{ \varepsilon_1,\dots,\varepsilon_N \}$ are i.i.d.\ samples from the distribution of $\varepsilon$. 
However, as discussed in \cite{rekavandi2024certified}, such smoother presents a major weakness: the smoothed function is vulnerable to outliers in the prediction, with a single sample being capable of pushing the smoothed output beyond the desired range $\sR$.
Therefore, \cite{rekavandi2024certified} proposes a $\alpha$-trimming smoothing technique, which allows trimming these outliers. The $\alpha$-smoothed predictor is defined as follows.
\begin{definition}[$\alpha$-smoothing \cite{rekavandi2024certified}]\label{def:alpha-smoothing}
    Let $h_\randvar{w}$ be a stochastic predictor with $\sX,\sY,$ being its input and output spaces and $x \in \sX$. Given $N\in\natNum_{>0}$ i.i.d.\ samples $\{ \varepsilon_1,\dots,\varepsilon_N \}$ from a distribution $\Prob_\varepsilon \in \sP(\sX \ominus \{x\})$, define the random vector $\sH(x) = \big( h_\randvar{w}(x+\varepsilon_i) \big)_{i\in [N]}$. Then, for $\alpha \in [0, \frac{1}{2})$, the $\alpha$-smoothed predictor is defined as
    \begin{equation}\label{eq:def-alpha-smoothing}
        \smoothed{\sH}_{N,\alpha}(x) = \frac{1}{N - 2\floor{\alpha N}} \sum_{i=1+\floor{\alpha N}}^{N - \floor{\alpha N}} \sH(x)_{(i)},
    \end{equation}
    where the $i$-th order statistic in $\sH(x)_{(i)}$ is taken element-wise.
\end{definition}

$\alpha$-smoothing allows tighter control over the probability that $\smoothed{\sH}_{N,\alpha}(x') \in \sR$ for any convex set $\sR\subseteq \sY$. However, when the predictor distribution is multi-modal, the above-mentioned technique inadvertently collapses its modes (Section \ref{section:randomized-smoothing}). To solve this issue, we propose  \emph{clustered $\alpha$-smoothing}.
\review{Before introducing clustered $\alpha$-smoothing, we should note that clustering has previously been applied successfully in randomized smoothing to robustify large language models \cite{su2025smoothguard, wang2026clucert} under the justification that adversarial examples are often located in small clusters adjacent to the dominant mode. However, unlike these previous works, the safety-critical settings considered in this manuscript require that we account for behavior of the predictor in those minor clusters and thus cannot simply prune them, as done  in \cite{su2025smoothguard, wang2026clucert}.}


\section{\texorpdfstring{Clustered $\alpha$-Smoothing}{Clustered alpha-smoothing}}\label{section:randomized-smoothing}
\begin{figure*}
    \centering
    \includegraphics[width=0.9\textwidth]{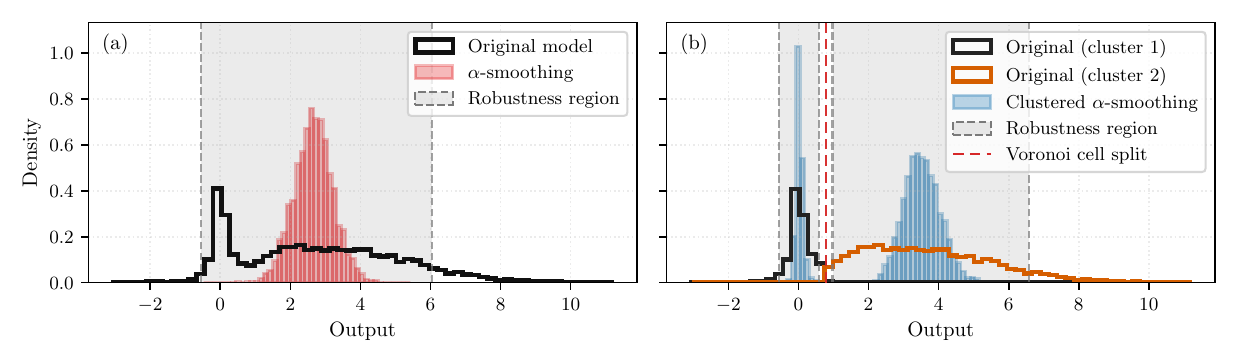}
    \vspace{-0.25cm}
    \caption{An example of $\alpha$-smoothing and clustered $\alpha$-smoothing at $x = 2$ of a stochastic base predictor $h_\randvar{w}(x) = \randvar{w}x$ where $\randvar{w} \sim \sum_{i = 1}^3 \pi_i \sN(\cdot \mid \mu_i, \sigma_i^2)$ is a mixture of three Gaussians with means $\mu = (1, 0, 2)$ and equal variance $\sigma_i^2 = 0.01$, and weights $\pi = (0.2, 0.2, 0.6)$. The histograms are obtained by Monte Carlo sampling the distributions of $h_\randvar{w}(x + \bm{\varepsilon})$, $\smoothed{\sH}_{N,\alpha,\{\sY\}}(x)$ (in (a), equivalent to $\alpha$-smoothing, see Remark \ref{rmk:compare-clustered-smoother-definition-rekavandi}), and our clustered $\alpha$-smoothing technique, $\smoothed{\sH}_{N,\alpha,\bsV}(x)$, in (b) constructed by Algorithm \ref{alg:prediction_certification}. The Voronoi cell split represents the partition $\bsV=\{ \sV_1, \sV_2 \}$ into two clusters, and the robustness regions in (b) denote sets that $\smoothed{\sH}_{N,\alpha,\bsV}(x)$ belongs to with high probability (see Algorithm \ref{alg:sets-construction}). }
    \label{fig:modes-synthetic}
    \vspace{-0.5cm}
\end{figure*}

In this section, we introduce a novel randomized smoothing method tailored to stochastic predictors, which we call \emph{clustered $\alpha$-smoothing}. The key idea is to  (i) cluster samples in the output space,  (ii) apply smoothing locally within each cluster, and 
(iii) combine the resulting predictors into a mixture distribution, with weights proportional to the size of each cluster.
By combining the locally-smoothed predictors as a mixture, as illustrated in Figure~\ref{fig:modes-synthetic}, we prevent the mode collapse observed in $\alpha$-smoothing~\cite{rekavandi2024certified}. 
To formally define clustered $\alpha$-smoothing, we let $\bsV = \{\sV_1, \dots, \sV_M\}$ be a partition of the output space $\sY$ into $M$ disjoint sets, each corresponding to a mode of the prediction. For now, we assume that $\bsV$ is given; in Section~\ref{section:algorithm}, we describe how to construct $\bsV$ via clustering so that it aligns with the modes of the predictor. 

\begin{definition}[Clustered $\alpha$-smoothing]\label{def:clustered-alpha-smoothing}
    Let $h_\randvar{w}$ be a stochastic predictor, $x \in \sX$ a point in its domain, and $\bsV = \{\sV_1, \dots, \sV_M\}$ a partition of $\sY$. Given $N\in\natNum_{>0}$ i.i.d.\ samples $\{ \varepsilon_1,\dots,\varepsilon_N \}$ from a distribution $\Prob_\varepsilon \in \sP(\sX \ominus \{x\})$, we define the set of indexes $I_m \subset [N]$ as $I_m = \Big\{ i\in [N] : h_\randvar{w}(x+\varepsilon_i) \in \sV_m \Big\}$. Then, define  $\sH_{\sV_m}(x) = \big( h_\randvar{w}(x+\varepsilon_i) \big)_{i\in I_m}$ and, for $\alpha \in [0, \frac{1}{2})$ call
    \begin{equation}\label{eq:intra-cluster-alpha-trimming-def}
        \smoothed{\sH}_{N,\alpha, \sV_m}(x) = \frac{1}{|I_m| - 2\floor{\alpha |I_m|}} \sum_{i=1+\floor{\alpha |I_m|}}^{|I_m| - \floor{\alpha |I_m|}} \sH_{\sV_m}(x)_{(i)},
    \end{equation}
    where the $i$-th ordering statist in $\sH_{\sV_m}(x)_{(i)}$ is taken element-wise. 
    Then, the clustered $\alpha$-smoothing is defined as follows
    \begin{equation}\label{eq:clustered-smoothing}
        \smoothed{\sH}_{N,\alpha, \bsV}(x) = \sum_{m=1}^M \indicator_{\randvar{z} = m}\smoothed{\sH}_{N,\alpha, \sV_m}(x)
    \end{equation}
    where $\randvar{z}$ is a categorical random variable with class weights $\evpi{m} = \frac{|I_m|}{N}$.
\end{definition}

Intuitively, the set of indexes $I_m$ identifies to which partition cell each prediction sample $h_\randvar{w}(x+\varepsilon_i)$ belongs. Then $\smoothed{\sH}_{N,\alpha, \sV_m}$ applies the $\alpha$-trimming average to each partition cell individually. Finally, the \emph{clustered $\alpha$-smoother} $\smoothed{\sH}_{N,\alpha, \bsV}(x)$ is constructed by randomly selecting each of the $\smoothed{\sH}_{N,\alpha, \sV_m}$ with a probability proportional to how many samples are in each $\sV_m$. 
The parameter $\alpha$ in Definition~\ref{def:clustered-alpha-smoothing} refers to trimming a $2\alpha$ fraction of samples, i.e., treating them as outliers, \emph{per cluster}. When $\alpha=0$, $\smoothed{\sH}_{N,\alpha,\sV_m}(x)$ reduces to the mean smoothing, $\smoothed{\sH}_{N,0,\sV_m}(x) = \frac{1}{|I_m|}\sum_{i\in I_m} h_\randvar{w}(x+\varepsilon_i)$. When $\alpha \to \frac{1}{2}$, $\smoothed{\sH}_{N,\alpha,\sV_m}(x)$ is equivalent to the median-smoothing, $\smoothed{\sH}_{N,\alpha,\sV_m}(x) = \text{median}_{i\in I_m} \big( h_\randvar{w}(x+\varepsilon_i) \big)$.

\begin{remark}[Relation with $\alpha$-smoothing in \cite{rekavandi2024certified}]\label{rmk:compare-clustered-smoother-definition-rekavandi}
    When $\bsV = \{ \sV \} = \{ \sY \}$, then Definition~\ref{def:clustered-alpha-smoothing} is equivalent to the $\alpha$-smoothing in \eqref{eq:def-alpha-smoothing}.
    However, when distributions exhibit multi-modality or are not strongly symmetric, then the flexibility of our approach can lead to a substantial improvement of performance over \cite{rekavandi2024certified}, as we will show in Section~\ref{section:experiments}.
\end{remark}

\textbf{Robustness Guarantees.} 
A core advantage of randomized smoothing is the ability to guarantee that, for sufficiently small perturbations, the output remains within a desired set with high probability. In this subsection, we develop this type of guarantee for clustered $\alpha$-smoothing. Our approach, which results in Theorem \ref{thm:robustness-certification-clustered-smoothing} below, is to 
apply the Neyman-Pearson Lemma \cite{cohen2019certified} to each cluster/mode, which are then combined to obtain worst-case bounds for the smoothed predictor via a union-bound argument combined with correction to the fact that $\bsV$ is a partition of $\sX$.
To apply the lemma, we follow the predominant convention in the randomized smoothing literature and adopt $\bm{\varepsilon} \sim \sN(0, \sigma^2 I)$ for $\sigma^2 > 0$, which requires $\sX = \realNum^d$.


\begin{theorem}[Robustness certification for clustered $\alpha$-smoothing]\label{thm:robustness-certification-clustered-smoothing}
    Let $\bsV = \left\{\sV_1, \dots, \sV_M \right\}$ be a partition of the output space $\sY$. Assume that $\check{p}_{\sV_m} \leq \Prob_{(\randvar{w}, \bm{\varepsilon})}\left(h_\randvar{w}(x + \bm{\varepsilon}) \in \sV_m \right) \leq \hat{p}_{\sV_m}$ for each $\sV_m$. Furthermore, let $\sL \subset [M]$ and define $\tilde{\sR} = \cup_{l \in \sL} \sR_l$ for convex sets $\sR_l \subset \sV_l$. Assume that $\Prob_{(\randvar{w}, \bm{\varepsilon})}\left(h_\randvar{w}(x + \bm{\varepsilon}) \in \sR_l \right) \geq \check{p}_{\sR_l}$ for each $\sR_l$. For $r \geq 0$, define for each $m \in [M]$ and $l \in \sL$
    \begin{equation}\label{eq:adjustment-given-radius}
        \ubar{p}_{\sV_m} = \Phi\left(\Phi^{-1}(\check{p}_{\sV_m}) - \frac{r}{\sigma}\right), \; \bar{p}_{\sV_m} = \Phi\left(\Phi^{-1}(\hat{p}_{\sV_m}) + \frac{r}{\sigma}\right), \; \text{ and } \;
        \ubar{p}_{\sR_l} = \Phi\left(\Phi^{-1}(\check{p}_{\sR_l}) - \frac{r}{\sigma}\right).
    \end{equation}
    Then, for any $\delta \in \realNum^d$ such that $\norm{\delta}_2 \leq r$, it holds that
    \begin{align}\label{eq:main-guarantee-clustered-smoothing}
        &\Prob_{(\randvar{w}, \bm{\varepsilon}_1, \ldots, \bm{\varepsilon}_N)} \left(\smoothed{\sH}_{N, \alpha, \bsV}(x + \delta) \in \tilde{\sR} \right) \nonumber \\
        &\qquad \geq \inf_{\substack{p_{\sV_m} \in [\ubar{p}_{\sV_m}, \bar{p}_{\sV_m}] \\ \sum_{m=1}^M p_{\sV_m}=1}} \sum_{l \in \sL} \sum_{s=1}^{N} \frac{s}{N} \sum_{j=\review{s-\floor{\alpha s}}}^{s} f_{s,j} \left( \frac{\ubar{p}_{\sR_l}}{p_{\sV_l}} \right) f_{N,s} \left(p_{\sV_l}\right).
    \end{align}
\end{theorem}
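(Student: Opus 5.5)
The argument has three steps: move the probability bounds from $x$ to $x+\delta$, decompose the event according to the mixture variable $\randvar{z}$ and the cluster sizes, and minimize over the unknown cluster probabilities.

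\emph{Step 1: transfer to the perturbed input.} Fix $\delta$ with $\norm{\delta}_2\le r$. For a measurable set $A\subseteq\sY$, consider $g_A(x')=\Prob_{(\randvar{w},\bm{\varepsilon})}(h_\randvar{w}(x'+\bm{\varepsilon})\in A)$. This is a Gaussian-smoothed $[0,1]$-valued function, since $\randvar{w}$ is independent of the input. By the Neyman--Pearson argument of \cite{cohen2019certified}, equivalently the $1/\sigma$-Lipschitzness of $\Phi^{-1}\circ g_A$, we get $\Phi^{-1}(g_A(x+\delta))\ge\Phi^{-1}(g_A(x))-r/\sigma$, and symmetrically an upper bound with $+r/\sigma$. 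Applying this with $A=\sV_m$ and $A=\sR_l$ gives
\[
p_{\sV_m}:=g_{\sV_m}(x+\delta)\in[\ubar{p}_{\sV_m},\bar{p}_{\sV_m}],
\qquad
g_{\sR_l}(x+\delta)\ge\ubar{p}_{\sR_l}.
\]
Because $\bsV$ is a partition, $\sum_m p_{\sV_m}=1$. The true vector $(p_{\sV_m})_m$ is therefore feasible for the infimum, so it suffices to lower-bound the probability at $x+\delta$ by the stated sum with these true values.

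\emph{Step 2: decompose on $\randvar{z}$ and $|I_l|$.} The events $\{\randvar{z}=l\}$ are disjoint, and $\sR_l\subset\sV_l$ are disjoint, so
\[
\Prob\big(\smoothed{\sH}_{N,\alpha,\bsV}(x+\delta)\in\tilde{\sR}\big)\;\ge\;\sum_{l\in\sL}\Prob\big(\randvar{z}=l,\ \smoothed{\sH}_{N,\alpha,\sV_l}(x+\delta)\in\sR_l\big).
\]
Condition on $|I_l|=s$, which is $\mathrm{Bin}(N,p_{\sV_l})$ and contributes the factor $f_{N,s}(p_{\sV_l})$. Given the samples, $\randvar{z}=l$ has probability $s/N$, which gives the factor $\frac{s}{N}$. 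The term $s=0$ vanishes.

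Conditional on $|I_l|=s$, the $s$ samples in cluster $l$ are i.i.d.\ from the law of $h_\randvar{w}(x+\delta+\bm{\varepsilon})$ restricted to $\sV_l$. Each of them lies in $\sR_l$ with probability $q=g_{\sR_l}(x+\delta)/p_{\sV_l}$, and $q\ge \ubar{p}_{\sR_l}/p_{\sV_l}$.

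\emph{Step 3: the trimming step, which I expect to be the main obstacle.} Suppose at least $s-\floor{\alpha s}$ of the $s$ samples lie in the convex set $\sR_l$, so at most $\floor{\alpha s}$ lie outside. The claim is that the trimmed mean then lies in $\sR_l$. The delicate point is that the order statistics are taken element-wise, so the trimmed mean is not literally a convex combination of sample vectors.

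I would follow the argument of \cite{rekavandi2024certified} and take it as given that, when all but $\floor{\alpha s}$ samples lie in $\sR_l$, the trimmed mean lies in $\sR_l$. For box-shaped $\sR_l$ this is immediate coordinatewise: each retained order statistic in each coordinate is sandwiched between values of inlier samples. That is the case used in the paper's algorithm, and it is where care is needed in general.

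Given this, the conditional probability of success is at least $\sum_{j=s-\floor{\alpha s}}^{s} f_{s,j}(q)$. This tail is increasing in $q$, so it is bounded below by the same sum with $q$ replaced by $\ubar{p}_{\sR_l}/p_{\sV_l}$. The ratio is at most $1$, because $\sR_l\subset\sV_l$ gives $g_{\sR_l}(x+\delta)\le p_{\sV_l}$.

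Summing over $s$ and $l$ gives the bound evaluated at the true $(p_{\sV_m})$. Taking the infimum over the feasible set from Step 1 then yields \eqref{eq:main-guarantee-clustered-smoothing}.
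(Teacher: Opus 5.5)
Your Steps 1 and 2 follow the paper's argument: Lemma~\ref{lem:robustness-per-region-certification-clustered-smoothing}, summed over $l\in\sL$. That is, a Neyman--Pearson transfer of the three bounds to $x+\delta$, the factor $f_{N,s}(p_{\sV_l})$ from $|I_l|$, the factor $s/N$ from the mixture draw, and a binomial tail in the number of inliers. Bounding at the true vector $(p_{\sV_m})_m$ and then using its feasibility is the correct justification for the constraint $\sum_m p_{\sV_m}=1$. The paper adds that constraint after taking per-$l$ infima, and adding a constraint can only raise an infimum, so that step holds only because of the feasibility argument you give. Make one assumption explicit: like the paper, you need an independent draw of $\randvar{w}$ for each of the $N$ samples (as in Algorithm~\ref{alg:sets-construction}). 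With a single shared $\randvar{w}$, which the notation $\Prob_{(\randvar{w},\bm{\varepsilon}_1,\ldots,\bm{\varepsilon}_N)}$ suggests, $|I_l|$ is only conditionally binomial.

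The genuine gap is the implication you propose to import in Step 3. No citation can supply it, because with element-wise order statistics it is false for general convex $\sR_l$. Work in $\realNum^2$ with $s=3$ and $\alpha=0.4$, so $\floor{\alpha s}=1$ and the trimmed mean is the coordinatewise median. Let $\sR_l=\{(t,t):t\in[0,1]\}$, with inliers $(0,0)$ and $(1,1)$ and one outlier $(2,-1)$. The coordinatewise median is $(1,0)\notin\sR_l$.

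This breaks the theorem itself, not only the proof. Take an input-independent predictor whose output is uniform on that segment with probability $\theta$ and equals $(2,-1)$ otherwise, with $M=1$, $r=0$, $N=3$. The claimed lower bound is $\theta^3+3\theta^2(1-\theta)$. But two inliers $(t_1,t_1),(t_2,t_2)$ with $t_1<t_2$ plus the outlier give $(t_2,t_1)\notin\sR_l$, and one or zero inliers also fail, so the true probability is only $\theta^3$. The paper's proof has exactly this hole: it concludes membership ``by the convexity of the set $\sR_m$''.

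What does hold is your coordinatewise sandwich argument. Suppose at most $\floor{\alpha s}$ samples lie outside $\sR_l$. Then in each coordinate, in total at most $\floor{\alpha s}$ values fall below the smallest or above the largest inlier value. The retained ranks $\floor{\alpha s}+1,\dots,s-\floor{\alpha s}$ exclude all of them, so the trimmed mean lies in the axis-aligned bounding box of the inliers. Your proof is therefore complete, and the statement correct, when every $\sR_l$ is a product of intervals, which is what the hyperrectangle envelopes in the experiments produce. State the result under that hypothesis rather than deferring the general convex case.
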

The proof of Theorem \ref{thm:robustness-certification-clustered-smoothing} can be found in Appendix~\ref{section:proofs}. For each partition region $\sV_m$, the interval $[\ubar{p}_{\sV_m}, \bar{p}_{\sV_m}]$ bounds the probability that $h_\randvar{w}(x + \delta + \bm{\varepsilon})\in \sV_m$ for any perturbation $\delta$ with norm $\norm{\delta}_2\leq r$. Analogously, $\ubar{p}_{\sR_l}$ represents a probability lower bounds on the event $h_\randvar{w}(x + \delta + \bm{\varepsilon})\in \sR_l$. Consequently, Theorem~\ref{thm:robustness-certification-clustered-smoothing} combines these bounds to obtain a lower bound for the probability that $\smoothed{\sH}_{N, \alpha, \bsV}(x + \delta) \in \tilde{\sR}$. Figure \ref{fig:bound-comparison} shows how the various parameters affect the resulting bounds. It can be observed that a higher trimming parameter $\alpha$ leads to a significantly faster increase of the bound, as outlier output values are eliminated from the average predictor in \eqref{eq:intra-cluster-alpha-trimming-def}, while higher ratios $\frac{p_{\sR_l}}{p_{\sV_l}}$ -- i.e. larger coverage in the partition set -- naturally increases the bound.

\begin{figure*}
    \centering
    \vspace{-0.25cm}
    \includegraphics[width=1.0\textwidth]{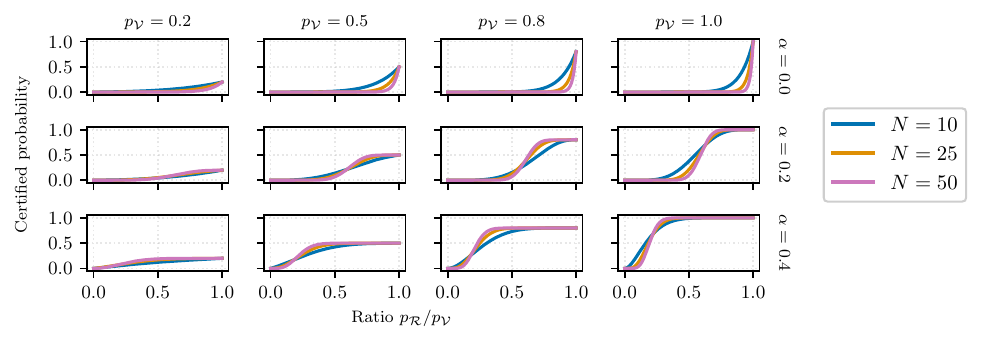}
    \caption{Probability bounds $\sum_{s=1}^{N} \frac{s}{N} \sum_{j=s-\floor{\alpha s}}^{s} f_{s,j} \left( \frac{p_{\sR}}{p_{\sV}} \right) f_{N,s} \left(p_{\sV}\right)$ from \eqref{eq:main-guarantee-per-region-clustered-smoothing} attributable to each mode for different combinations of number of smoothing samples $N$, trimming parameter $\alpha$, partition set probability $p_\sV$ and coverage ratio $p_\sR / p_\sV$. The right-most column (where $p_\sV=1$) is equivalent to the bound in \cite{rekavandi2024certified}, by noting that only the term $s=N$ in \eqref{eq:main-guarantee-per-region-clustered-smoothing} is non-zero when $\ubar{p}_\sV=p_\sV=\bar{p}_\sV=1$.}
    \label{fig:bound-comparison}
    \vspace{-0.5cm}
\end{figure*}

Before showing how to compute \eqref{eq:main-guarantee-clustered-smoothing} in practice, it is instructive to examine the properties of the case $\lvert \sL \rvert = 1$ first. In particular, the objective function in this case is independent from all other indices $[M] \setminus \sL$, and so, as we formally prove in Lemma~\ref{lem:robustness-per-region-certification-clustered-smoothing}, the bound in \eqref{eq:main-guarantee-clustered-smoothing} simply reduces to
\begin{align}\label{eq:main-guarantee-per-region-clustered-smoothing}
    &\Prob_{(\randvar{w}, \bm{\varepsilon}_1, \ldots, \bm{\varepsilon}_N)} \left(\smoothed{\sH}_{N, \alpha, \bsV}(x + \delta) \in \sR_l\right) \nonumber \\
    &\qquad \geq \sum_{s=1}^{N} \frac{s}{N} \sum_{j=\review{s-\floor{\alpha s}}}^{s} f_{s,j}\left(\frac{\ubar{p}_{\sR_l}}{\bar{p}_{\sV_l}} \right) \min\left\{f_{N,s}\left(\ubar{p}_{\sV_l}\right), \; f_{N,s}\left(\bar{p}_{\sV_l}\right) \right\}.
\end{align}

\paragraph{Solving the general case.}
Unfortunately, the computational convenience of the case $|\sL|=1$ does not extend to the general case $|\sL|>1$, as in this setting \eqref{eq:main-guarantee-clustered-smoothing} consists of a non-convex optimization problem. Instead, to achieve a computationally tractable result, we show how the optimization problem in \eqref{eq:main-guarantee-clustered-smoothing} can be lower bounded by a linear program via \emph{anchor points} and an error that can be controlled by the user. This is presented in Proposition \ref{prop:prob_bound_with_anchor_points}. 

\begin{proposition}\label{prop:prob_bound_with_anchor_points}
   Assume that $\ubar{p}_{\sV_m}, \bar{p}_{\sV_m}$ for every $m \in [M]$ and $\ubar{p}_{\sR_l}$ for every $l \in \sL$  are given.
   Define for each component $l \in \sL$ the anchor points $p_{\sV_l}^{(1)} < p_{\sV_l}^{(2)} < \cdots < p_{\sV_l}^{(K)}$, $p_{\sV_l}^{(k)} \in [\ubar{p}_{\sV_l}, \bar{p}_{\sV_l}]$.
    Then, it holds that
    \begin{align}\label{eq:bound-union-lp-formulation}
        &\inf_{\substack{p_{\sV_m} \in [\ubar{p}_{\sV_m}, \bar{p}_{\sV_m}] \\ \sum_{m=1}^M p_{\sV_m}=1}} \sum_{l \in \sL} \sum_{s=1}^{N} \frac{s}{N} \sum_{j=\review{s-\floor{\alpha s}}}^{s} f_{s,j} \left( \frac{\ubar{p}_{\sR_l}}{p_{\sV_l}} \right) f_{N,s} \left(p_{\sV_l}\right) \\
        &\qquad\geq \inf_{\bm{p},\, \bm{\lambda}_1, \ldots, \bm{\lambda}_{\lvert \sL \rvert}}
        \;\sum_{l \in \sL} \sum_{k=1}^K \lambda_l^{(k)} g_l(p_{\sV_l}^{(k)}) - \sum_{l \in \sL} \left[\frac{N}{1 - \bar{p}_{\sV_l}} + \frac{N}{\ubar{p}_{\sV_l} - \ubar{p}_{\sR_l}}\right] h_{l, K}\label{eq:union_objective} \\[0.75em]
        &\qquad\qquad\text{s.t.}\quad p_{\sV_m} \in [\ubar{p}_{\sV_m}, \bar{p}_{\sV_m}], \; \sum_{m=1}^M p_{\sV_m} = 1, \; \sum_{k=1}^K \lambda_l^{(k)} p_{\sV_l}^{(k)} = p_{\sV_l}, \; \bm{\lambda}_l \in \Delta_K,
    \end{align}
    where the variable $h_{l, K}$ and function $g_l$ are given by $h_{l, K} = \max_{p\in[\ubar{p}_{\sV_l}, \bar{p}_{\sV_l}]} \min_{k \in [K]} \lvert p - p_{\sV_l}^{(k)} \rvert$ and
$        g_l(p) = \sum_{s=1}^{N} \frac{s}{N} f_{N,s}(p) \sum_{j=\review{s-\floor{\alpha s}} }^{s} f_{s,j} \left( \frac{\ubar{p}_{\sR_l}}{p} \right). $
\end{proposition}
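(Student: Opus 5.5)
The plan is to replace each nonconvex term $g_l(p_{\sV_l})$ by a piecewise-interpolated surrogate built from the anchor values $g_l(p_{\sV_l}^{(k)})$, and to pay for this with a Lipschitz error term. Fix any feasible $(p_{\sV_m})_{m\in[M]}$ for the left-hand infimum. For each $l \in \sL$ we construct weights $\bm{\lambda}_l \in \Delta_K$ with $\sum_k \lambda_l^{(k)} p_{\sV_l}^{(k)} = p_{\sV_l}$ and
\[
g_l(p_{\sV_l}) \geq \sum_k \lambda_l^{(k)} g_l(p_{\sV_l}^{(k)}) - L_l h_{l,K}, \qquad L_l = \frac{N}{1-\bar{p}_{\sV_l}} + \frac{N}{\ubar{p}_{\sV_l}-\ubar{p}_{\sR_l}} .
\]
Then $(\bm{p},\bm{\lambda}_1,\dots,\bm{\lambda}_{|\sL|})$ is feasible for the right-hand program, and its objective is at most the left-hand objective. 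Taking the infimum over $\bm{p}$ on both sides gives the claim.

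\emph{Choice of weights and the error estimate.} Assume the anchors include the endpoints, or at least that $p_{\sV_l}$ lies between two consecutive anchors $p^{(k)} \le p_{\sV_l} \le p^{(k+1)}$. We put all weight on these two anchors, with linear interpolation coefficients. Then
\[
\Big|\sum_k \lambda^{(k)} g_l(p^{(k)}) - g_l(p_{\sV_l})\Big| \le \max_{i\in\{k,k+1\}} |g_l(p^{(i)}) - g_l(p_{\sV_l})| \le L_l \max_i |p^{(i)} - p_{\sV_l}| .
\]
We want to control this last quantity by $h_{l,K}$. A convex combination that reproduces $p_{\sV_l}$ exactly may need both neighbours, and the farther neighbour can be up to roughly $2h_{l,K}$ away. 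A sharper argument interpolates and uses the mean value theorem on each side: the interpolation error at $p$ is a weighted difference of slopes times distances. Specifically, $\lambda^{(k)}(g(p^{(k)})-g(p)) + \lambda^{(k+1)}(g(p^{(k+1)})-g(p))$, with $\lambda^{(k)} = (p^{(k+1)}-p)/\Delta$ and $\lambda^{(k+1)} = (p-p^{(k)})/\Delta$, is bounded by
\[
2L\,\frac{(p^{(k+1)}-p)(p-p^{(k)})}{\Delta} \le 2L \min\{p-p^{(k)},\, p^{(k+1)}-p\} \le 2L\, h_{l,K} .
\]
So a constant of $2L h_{l,K}$ comes out naturally. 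I expect the constant in the statement to absorb this, or the authors to bound the Lipschitz constant loosely enough: $L_l$ is a sum of two terms, and each of the two pieces of the derivative contributes one of them. I would check that the two-term structure of $L_l$ matches $|g'| \le$ first term $+$ second term, and that the factor-2 bookkeeping closes. If it does not, I would fall back to nearest-anchor-based estimates within each interval.

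\emph{The Lipschitz bound, which is the main technical step.} We need to show $|g_l'(p)| \le L_l$ on $[\ubar{p}_{\sV_l}, \bar{p}_{\sV_l}]$. Write $g_l(p) = \sum_s \frac{s}{N} f_{N,s}(p)\, B_s(q(p))$ with $q = \ubar{p}_{\sR_l}/p$ and $B_s(q) = \Pr(\mathrm{Bin}(s,q) \ge s-\floor{\alpha s}) \in [0,1]$. The product rule gives two pieces.
\begin{itemize}
\item For the first piece, use $\frac{d}{dp} f_{N,s}(p) = f_{N,s}(p)\,\frac{s-Np}{p(1-p)}$. Then $\sum_s \frac{s}{N} |f_{N,s}'| B_s \le \sum_s |f'_{N,s}|$, and this is bounded via $|s-Np| \le N$ and $\sum_s f_{N,s} \le 1$. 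After splitting $(s-Np)/(p(1-p)) = s/p - (N-s)/(1-p)$ and handling the terms carefully, this yields something like $N/(1-p) \le N/(1-\bar{p}_{\sV_l})$.
\item For the second piece, note that $B_s'(q)$ is a nonnegative binomial-type density, $s f_{s-1,j-1}(q)$ at $j = s-\floor{\alpha s}$, hence at most $s \le N$. Also $|dq/dp| = \ubar{p}_{\sR_l}/p^2$. Combining these with $\sum_s \frac{s}{N} f_{N,s} \le 1$ gives a bound of order $N/(p-\ubar{p}_{\sR_l})$, which is at most $N/(\ubar{p}_{\sV_l}-\ubar{p}_{\sR_l})$. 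Here one uses a factor like $q^{j-1}(1-q)^{s-j} \le$ something in $1/(1-q)$, together with $p(1-q) = p-\ubar{p}_{\sR_l}$.
\end{itemize}
Getting these two crude bounds to land on exactly the stated denominators is the step I expect to require care. Monotonicity of the $p$-dependent factors on the interval then replaces $p$ by the worst endpoint in each piece.
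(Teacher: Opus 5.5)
Your architecture is the paper's. For each feasible $\bm p$ you interpolate between the two anchors bracketing $p_{\sV_l}$ to get a feasible point of the LP, pay $L_l h_{l,K}$ per component, and bound $L_l$ through $|g_l'|$; this mirrors Lemma~\ref{lemma:lower-bound-as-lp}, the LP rewrite, and Lemma~\ref{lemma:computing-lipschitz-constant}. Your assumption that the anchors include $\ubar{p}_{\sV_l}$ and $\bar{p}_{\sV_l}$ is also the paper's: it is a hypothesis of Lemma~\ref{lemma:lower-bound-as-lp}, though it is missing from the proposition's statement. It is genuinely needed, since otherwise the LP forces $p_{\sV_l}\in[p^{(1)}_{\sV_l},p^{(K)}_{\sV_l}]$ and can even be infeasible. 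However, the two quantitative steps you leave open do not close as written.

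\emph{The factor $2$.} You correctly obtain the interpolation error $2L\,ab/\Delta$, with $a=p-p^{(k)}$, $b=p^{(k+1)}-p$ and $\Delta=a+b$. You then weaken it to $2L\min\{a,b\}$, which loses the constant. Use $4ab\le(a+b)^2$ instead: $2L\,ab/\Delta\le L\Delta/2\le L h_{l,K}$. The last step holds because the midpoint of $[p^{(k)},p^{(k+1)}]$ lies in $[\ubar{p}_{\sV_l},\bar{p}_{\sV_l}]$ and its nearest anchor is at distance exactly $\Delta/2$. This is the computation left implicit in Lemma~\ref{lemma:lower-bound-as-lp}, so nothing has to be ``absorbed.'' Your fallback to nearest-anchor estimates would not be admissible anyway, because the constraint $\sum_k\lambda_l^{(k)}p^{(k)}_{\sV_l}=p_{\sV_l}$ demands an exact convex combination.

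\emph{The first piece of $g_l'$.} Once you drop the weight $s/N$, the intermediate quantity $\sum_s|f'_{N,s}(p)|=\mathbb{E}|S-Np|/(p(1-p))$, with $S\sim\mathrm{Bin}(N,p)$, is no longer bounded by $N/(1-\bar p)$. The estimate $|s-Np|\le N$ gives $N/(p(1-p))$, which blows up as $p\to0$. The split $s/p-(N-s)/(1-p)$ gives only $2N$; for $N=1$ the sum is exactly $2>1/(1-\bar p)$ whenever $\bar p<1/2$. Keep the weight:
\[
\sum_s \tfrac{s}{N}f_{N,s}(p)\Bigl(\tfrac{s}{p}+\tfrac{N-s}{1-p}\Bigr)=\frac{\mathbb{E}[S^2]}{Np}+\frac{\mathbb{E}[S(N-S)]}{N(1-p)}=1+2(N-1)p\le N(1+p)\le\frac{N}{1-\bar p}.
\]
Your second piece is fine. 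With $\tilde p=\ubar{p}_{\sR_l}$, you have $B_s'\le s\le N$, and $\tilde p/p^2\le 1/(p-\tilde p)$ because $p^2-p\tilde p+\tilde p^2\ge0$.

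With both fixes your proof is complete and coincides with the paper's. A cleaner route for the Lipschitz step, and what Lemma~\ref{lemma:computing-lipschitz-constant} is really doing, is to write each summand as the multinomial term $\binom{N}{s}\binom{s}{j}\tilde p^{\,j}(p-\tilde p)^{s-j}(1-p)^{N-s}$. Its logarithmic derivative is $\frac{s-j}{p-\tilde p}-\frac{N-s}{1-p}$, which is bounded by exactly the two stated terms. Here the $s/p$ coming from $p^s$ cancels against the $(\tilde p/p)^j$ factor. For the same reason, the paper's separate bound of the factor $p^s(1-p)^{N-s}$ by $N/(1-\bar p)$ is not valid on its own, although the combined bound is.
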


The intuition behind Proposition 
\ref{prop:prob_bound_with_anchor_points} is the following. We seek a tractable lower bound on the infimum in \eqref{eq:bound-union-lp-formulation}. This is generally intractable. Consequently, instead, we replace $p_{\sR_l}$ by $\ubar{p}_{\sR_l}$ to obtain a valid lower bound, and then approximate the convex 
envelope of each inner component using a finite set of evaluation points -- the \emph{anchor points} -- leading to a linear 
program in which the coupling constraint is handled natively. The bound is asymptotically exact as the number of anchor points 
$K$ increases, at the cost of increasing the size of the linear program. We call the second term of \eqref{eq:union_objective} the residual, which represents an adjustment needed due to the fact that we only approximate the convex envelope. Note that, as we show in Appendix~\ref{section:residual-analysis-prop-appendix}, one can always achieve an arbitrarily small residual value by increasing $K$. 

\begin{remark}[Extension to $L_\rho$-norm attack]\label{remark:extension-thm-lp-norm}
    Given $\rho \in [2, \infty) \cup \{ \infty \}$, the guarantee \eqref{eq:main-guarantee-clustered-smoothing} also holds for $\delta \in \realNum^d$ such that $\norm{\delta}_\rho \leq d^{\frac{1}{\rho}-\frac{1}{2}} r$ by a straight-forward application of Holder's inequality \cite{rudin1987real}.
\end{remark}

\section{Algorithm}\label{section:algorithm}

In Theorem \ref{thm:robustness-certification-clustered-smoothing}, given a point $x\in\sX$, we need to compute probability bounds for the events $h_\randvar{w}(x+\varepsilon) \in \sV_m$ and $h_\randvar{w}(x+\varepsilon) \in \sR_l$, where $\bsV=\{ \sV_m \}_{m=1}^M$ is a partition of the output space $\sY$ into $M$ sets, and $\sR_l \subseteq \sV_l$ are given subsets. Given the (possibly) non-linearity of the underlying deterministic function $h_w$ on both the parameters $w$ and input $x$, computing those quantities explicitly is generally intractable. Thus, one needs to resort to sample-based methods, for which probability bounds are estimated with high-confidence.

\begin{proposition}[High-confidence probability bounds for sets]\label{prop:high-confidence-set-construction}
    Let $\bsV = \{\sV_m\}_{m = 1}^M$ be a partition of the space $\sX$, and $\sR_l \subseteq \sV_l$ be given subsets. Given a confidence parameter $\beta \in [0,1]$, and a number of samples $\bar{N} \in \natNum_{>0}$, let $Z_{\sV_m} = \sum_{i=1}^{\bar{N}} \indicator_{h_{w_i}(x+\varepsilon_i) \in \sV_m}$ and $Z_{\sR_m} = \sum_{i=1}^{\bar{N}} \indicator_{h_{w_i}(x+\varepsilon_i) \in \sR_m}$, where $(w_i, \varepsilon_i)$ are i.i.d.\ samples from $\Prob_{(\randvar{w}, \bm{\varepsilon})}$. If $\check{p}_{\sV_m}, \hat{p}_{\sV_m}, \check{p}_{\sR_m} \in [0,1]$ are such that\footnote{ If $Z_{\sV_m}=0$ (resp. $Z_{\sR_m}=0$), then $\check{p}_{\sV_m}=0$ (resp. $\check{p}_{\sR_m}=0$). Alternatively, if $Z_{\sV_m}=\bar{N}$, then $\hat{p}_{\sV_m}=1$. }
    \begin{equation}\label{eq:cp-partition-lower-bound}
        \frac{\beta}{3M} = \sum_{j=Z_{\sV_m}}^{\bar{N}} f_{\bar{N},j}\left( \check{p}_{\sV_m} \right), \;
        \frac{\beta}{3M} = \sum_{j=0}^{Z_{\sV_m}}f_{\bar{N},j}\left( \hat{p}_{\sV_m} \right), \; \text{ and } \;
        \frac{\beta}{3M} = \sum_{j=Z_{\sR_m}}^{\bar{N}} f_{\bar{N},j}\left( \check{p}_{\sR_m} \right),
    \end{equation}
    then, the assumptions in Theorem~\ref{thm:robustness-certification-clustered-smoothing} hold with confidence $1-\beta$.
\end{proposition}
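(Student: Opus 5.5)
The plan is to recognize each of the three equations in \eqref{eq:cp-partition-lower-bound} as a one-sided Clopper--Pearson bound at level $\beta/(3M)$, and then combine the $3M$ resulting statements with a union bound.

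Fix $m\in[M]$ and write $p_{\sV_m}=\Prob_{(\randvar{w},\bm{\varepsilon})}(h_\randvar{w}(x+\bm{\varepsilon})\in\sV_m)$. Since the pairs $(w_i,\varepsilon_i)$ are i.i.d., the indicators $\indicator_{h_{w_i}(x+\varepsilon_i)\in\sV_m}$ are i.i.d.\ Bernoulli$(p_{\sV_m})$. Hence $Z_{\sV_m}\sim\mathrm{Bin}(\bar N,p_{\sV_m})$, and likewise $Z_{\sR_m}\sim\mathrm{Bin}(\bar N,p_{\sR_m})$.

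The core step is the standard Clopper--Pearson argument; we give it for the lower bound, and the other cases are symmetric. Let $G(p;z)=\sum_{j=z}^{\bar N}f_{\bar N,j}(p)$. This is the upper tail of the binomial, which is continuous and strictly increasing in $p$ for $z\ge1$. The defining equation makes $\check p_{\sV_m}$ the unique solution of $G(\check p;Z_{\sV_m})=\beta/(3M)$. The failure event $\{\check p_{\sV_m}>p_{\sV_m}\}$ is then equivalent to $\{G(p_{\sV_m};Z_{\sV_m})<\beta/(3M)\}$.

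To bound this event, let $z^\ast$ be the smallest integer with $G(p_{\sV_m};z^\ast)<\beta/(3M)$. Since $G$ is decreasing in $z$, the event is exactly $\{Z_{\sV_m}\ge z^\ast\}$. Its probability is $G(p_{\sV_m};z^\ast)<\beta/(3M)$.

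When $Z_{\sV_m}=0$, the footnote sets $\check p_{\sV_m}=0$, and failure is impossible. The upper bound $\hat p_{\sV_m}$ is handled the same way using the lower tail $\sum_{j=0}^{z}f_{\bar N,j}(p)$, which is decreasing in $p$. The boundary case $Z_{\sV_m}=\bar N$ is covered by the convention $\hat p_{\sV_m}=1$. The bound $\check p_{\sR_m}$ is treated exactly like $\check p_{\sV_m}$.

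Each of the $3M$ failure events therefore has probability at most $\beta/(3M)$. By the union bound, all of them hold simultaneously with probability at least $1-\beta$. On that event we have $\check p_{\sV_m}\le p_{\sV_m}\le\hat p_{\sV_m}$ and $p_{\sR_l}\ge\check p_{\sR_l}$ for every $m$ and every $l$. These are exactly the hypotheses of Theorem~\ref{thm:robustness-certification-clustered-smoothing}.

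Two details need care. First, the bounds must be taken over the joint law of $(\randvar{w},\bm{\varepsilon})$ at the unperturbed point $x$, as the theorem requires. Second, only indices in $\sL$ need the $\sR$-bounds, so allocating $\beta/(3M)$ to all $M$ of them is conservative.

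The main obstacle is the monotonicity and inversion step. We must check that $p\mapsto G(p;z)$ is strictly increasing, which follows from the identity linking the binomial tail to the regularized incomplete beta function. We must also handle the discreteness of $Z$ correctly, so that the inequality goes the right way and yields $\le\beta/(3M)$ rather than an approximate statement.
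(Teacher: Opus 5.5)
Your proposal is correct and follows the paper's proof. The paper likewise applies the one-sided Clopper--Pearson bound at level $\beta/(3M)$ to each of the $3M$ quantities. It then combines them with the Fréchet inequality $\Prob(\cap_i A_i)\ge\sum_i\Prob(A_i)-(3M-1)$, which is equivalent to the union bound you use. The only difference is that you derive the Clopper--Pearson guarantee explicitly (monotonicity in $p$, inversion, and the discreteness of $Z$, with the boundary cases $Z=0$ and $Z=\bar N$), whereas the paper simply cites it as a lemma.
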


Proposition \ref{prop:high-confidence-set-construction} comes from the Clopper-Pearson Lemma \cite{cohen2019certified} combined with a union-bound argument. It gives a natural way to construct the partition $\bsV$ and sets $\bsR$: after observing the samples $\{ h_{w_i}(x+\varepsilon_i) \}$, one may place the sets $\sV_m$ and $\sR_m$ so that the number of outputs inside them (approximately) induces the desired probability level. This reasoning is the core principle of Algorithm \ref{alg:sets-construction}.

\begin{algorithm}
\caption{Construction of partition $\bsV=\big\{ \sV_1,\dots,\sV_M \big\}$ and sets $\bsR=\big\{ \sR_1,\dots,\sR_M \big\}$ from samples such that $\sR_i \subset \sV_i$ and $\Prob(h_{\randvar{w}}(x + \varepsilon) \in \sR_i) / \Prob(h_{\randvar{w}}(x + \varepsilon) \in \sV_i) \approx p$.}
\label{alg:sets-construction}
\begin{algorithmic}[1]
\Function{ConstructCoverageSets}{$x$, $h_{\randvar{w}}$, $p$, $N$}
    \State $\mathsf{Outputs} \gets \emptyset$
    
    \For{$i \gets 1, \ldots, N$}
        \State Draw $\varepsilon_i \sim \mathcal{N}(0,\sigma^2 I)$ and 
        $w_{i} \sim \Prob_{\randvar{w}}$
    
        \State Compute $h_{w_i}(x+\varepsilon_i)$ and append to $\mathsf{Outputs}$
    \EndFor
    
    \State Cluster the elements of $\mathsf{Outputs}$ \Comment{e.g., using DBSCAN, DP-means, UMAP.}
    
    \State Denote the resulting clusters by $C_1, \dots, C_M$
    
    \For{$m \gets 1, \ldots, M$}
        \State $C_m^{(p)} \gets$ Select approximately a fraction $p$ of the points in $C_m$ \label{line:coverage_fraction}
        
        \State $\sR_m \gets \Call{Envelope}{C_m^{(p)}}$ \Comment{e.g., convex hull, hyperrectangle.}
    \EndFor
    
    \State $\bsV \gets$ generalized Voronoi partition associated with $\bsR$ \label{line:gen_voronoi_partition}
    
    \State \Return partition $\bsV$ and coverage regions $\bsR$
\EndFunction
\end{algorithmic}
\end{algorithm}

Given the user's desired coverage level $p$, Algorithm \ref{alg:sets-construction} generates the sets $\sR_m$ to cover a fraction $p$ of the samples within $\sV_m$ (Line~\ref{line:coverage_fraction}). This parameter can be used to calibrate the trade-off between the size of the coverage sets and their certified probability. 
Next, by choosing the partition of the input space as a generalized Voronoi partition (closest \emph{region} instead of point, Line~\ref{line:gen_voronoi_partition}; see Appendix~\ref{section:generalized_voronoi} for details) \cite{milenkovic1993robust}, we enforce that the regions are contained in their partition, which is a requirement for Theorem~\ref{thm:robustness-certification-clustered-smoothing}. 
Although computing the (generalized) Voronoi partition associated with a set of regions $\bsR$ is hard (see Algorithm~\ref{alg:prediction_certification}), we only need to check whether a point belongs to $\sV_m$, which is equivalent to checking if $\sR_m$ is the closest region to this point. Thus, in practice, $\bsV$ is implicitly defined by $\bsR$. With partition $\bsV$, we can compute the robustness bounds introduced in Theorem~\ref{thm:robustness-certification-clustered-smoothing} for the clustered $\alpha$-smoother $\smoothed{\sH}_{N,\alpha,\bsV}$, as presented in Algorithm~\ref{alg:prediction_certification}. 
Appendix~\ref{section:parameters} expands on the parameters of Algorithm~\ref{alg:prediction_certification} and their impact on the coverage regions and certified probabilities.

\begin{algorithm}
\caption{Robustness certification of $\smoothed{\sH}_{N, \alpha, \bsV}$ at $x \in \sX$\protect\footnotemark.}
\label{alg:prediction_certification}
\begin{algorithmic}[1]
\Require Point $x \in \mathbb{R}^d$, stochastic function $h_{\randvar{w}}$, smoothing parameters $\alpha \in [0,1]$ and $N \in \natNum_{>0}$, coverage level $p\in[0,1]$, robustness radius $r\geq 0$, number of samples $\bar N \in \natNum_{>0}$, and confidence level $\beta\in (0,1)$.
\Ensure With confidence $1-\beta$, for any $m \in [M]$, $\smoothed{\sH}_{N,\alpha,\bsV}(x+\delta) \in \tilde{\sR}$ with probability given by \eqref{eq:main-guarantee-clustered-smoothing} for any $\delta \in \realNum^d$ such that $\norm{\delta}_2 \leq r$.

\State $\bsV, \bsR \gets \Call{ConstructCoverageSets}{x, h_{\randvar{w}}, p, \bar{N}}$

\State Construct $\smoothed{\sH}_{N,\alpha,\bsV}(x)$

\State Compute $\check{p}_{\sV_m}, \hat{p}_{\sV_m}, \check{p}_{\sR_m} \in [0,1]$ with Proposition \ref{prop:high-confidence-set-construction} for $m \in [M]$ (using $\beta$ and $\bar N$)

\State Compute $\ubar{p}_{\sV_m}, \bar{p}_{\sV_m}, \ubar{p}_{\sR_m}$ using \eqref{eq:adjustment-given-radius} for $m \in [M]$

\State \Return $\smoothed{\sH}_{N,\alpha,\bsV}(x)$ and \eqref{eq:main-guarantee-clustered-smoothing} certificate.
    
\end{algorithmic}
\end{algorithm}

\footnotetext{
The quality of the smoothed predictor $\smoothed{\sH}_{N, \alpha, \bsV}(x)$ depends critically on the choice of partition $\bsV$. In Algorithm~\ref{alg:sets-construction}, we construct $\bsV$ locally around $x$ to ensure good performance in its neighborhood. However, to preserve the validity of Theorem~\ref{thm:robustness-certification-clustered-smoothing}, the same partition must be used for all points $x'$ within a $\delta$-neighborhood of $x$.
}

\review{The time complexity of Algorithm~\ref{alg:prediction_certification} is generally dominated by the repeated inference cost of the base classifier. It is thus governed by the number of samples required, which is determined by (i) the data patterns and clustering algorithm, (ii) the confidence bound (binomial inference, see Proposition~\ref{prop:high-confidence-set-construction}), and (iii) the probability bound in Theorem~\ref{thm:robustness-certification-clustered-smoothing}.}

\section{Experiments}\label{section:experiments}

We evaluated our framework on two benchmarks: stochastic trajectory prediction and multi-modal RL quadrotor control. For both experiments, we use the minimum bounding hyperrectangle for the envelope of each cluster (Appendix~\ref{section:experiment_details} provides further details on the experiment setup)\footnote{\review{The code is publicly available at \url{https://github.com/EduardoFMDCosta/ClusteredRandomizedSmoothing}.}}. All experiments were run on an Intel Core i7-1365U CPU with 16GB of RAM.

\textbf{Trajectory Prediction. }
We consider the stochastic \emph{TrajFlow} prediction model~\cite{meszaros_trajflow_2024}, a normalizing flow-based model (details in Appendix~\ref{subsection:trajectory_prediction_details}), and apply it to the \emph{L-GAP} driving simulator dataset~\cite{zgonnikov_should_2024}. In this dataset, human drivers face a decision whether to turn left in front of an oncoming vehicle (\emph{go}) or wait for it to pass before turning (\emph{yield}). The recorded trajectories are converted into control inputs (acceleration and curvature) using the unicycle model~ \cite{schumann2025realistic}. The resulting stochastic predictor is then defined as $h_\randvar{w}: \realNum^{18} \to \review{\realNum^{40}}$, where the input space $\realNum^{18}$ encodes the two control actions that underlie the $9$ transitions between the $10$ past recorded positions of the target agent. The output space \review{$\realNum^{40}$} corresponds to the position of the target vehicle \review{over $20$ time-steps}, here corresponding to a horizon of $\SI{2}{s}$.

\review{We start with a safety analysis comparing our approach to $\alpha$-smoothing and RS-Reg \cite{rekavandi2024rs} (which is equivalent to smoothing without any trimming, i.e. $\alpha=0$ in \eqref{eq:def-alpha-smoothing}) using the L-GAP dataset. In particular, we evaluate the predicted probability that the vehicle will adopt the behavior \textit{stay}\footnote{\review{The behaviors \emph{stay} and \emph{go} are defined in the Appendix~\ref{section:experiment_details} (\emph{Behavioral clustering}).}} and we make the ego vehicle \textit{go} if it predicts a \textit{stay} probability for the other higher than a decision threshold of $95\%$. We then use the actual trajectory of the predicted vehicle to assess the risk of this decision: if both vehicles decided to \textit{go}, we classify it as a \textit{risky decision}. We then report the risk rate for $200$ inputs in our database. We show that RS-Reg has a risk rate of $7.5\%$, $\alpha$-smoothing of $16.5\%$, while ours only $2.5\%$. The main mechanism behind this result is that while $\alpha$-smoothing and RS-Reg are often highly sure of the oncoming vehicle's behavior (due to the mode collapse for both RS-Reg and $\alpha$-smoothing), our clustered smoother allows for behavioral uncertainty that makes it more cautious about adopting the \textit{go} decision, as illustrated in Figure~\ref{fig:trajectory-prediction-safety-analysis}.}
\begin{figure*}
    \centering
    \includegraphics[width=1.0\textwidth]{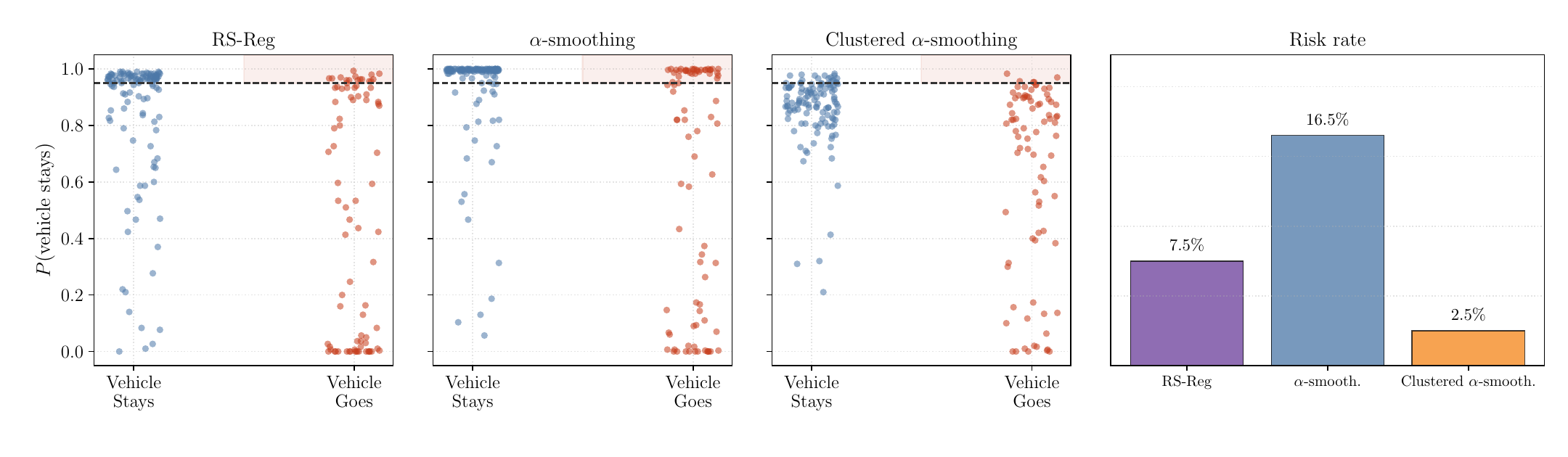}
    \vspace{-0.5cm}
    \caption{ \review{Safety analysis for $200$ randomly selected inputs $x_i$ from the L-GAP dataset.} }
    \label{fig:trajectory-prediction-safety-analysis}
    \vspace{-0.5cm}
\end{figure*}

\review{Second, for a quantitative comparison of the discrepancy between the output distributions from each approach, we select a random subset of $200$ input trajectories $x_i$ from L-GAP and compute a sample approximation of the $2$-Wasserstein distance $\wasserstein_2$ between the noisy predictor $h_\randvar{w}(x_i+\bm{\varepsilon})$ and $\alpha$-smoothing, RS-Reg, and ours $-$ note that we are thus comparing distributions embedded in a $40$-dimensional space. We found that our clustered $\alpha$-smoothing reduces the mean $2$-Wasserstein distance to $h_\randvar{w}(x+\bm{\varepsilon})$ in this subset of data in $27\%$ compared to $\alpha$-smoothing ($3.80$ vs. $5.18$; Table \ref{tab:wasserstein-data}). This confirms the intuition that the distributional flexibility of our method generally yields predictions that are closer to the original predictor, thereby better capturing its uncertainty (as also illustrated in Figure~\ref{fig:trajectory-prediction-samples-experiment} in Appendix~\ref{section:additional-experimental-results}).}

\begin{figure}[htbp]
\centering

\newsavebox{\figbox}
\sbox{\figbox}{\includegraphics[width=0.48\textwidth, height=3.7cm]{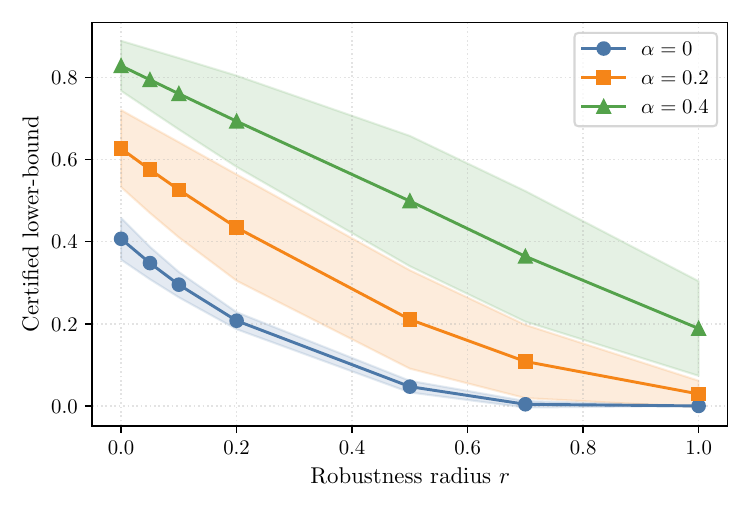}}

\begin{minipage}[c][\ht\figbox+\dp\figbox]{0.48\textwidth}
    \centering
    \vspace{0.0cm}
    \renewcommand{\arraystretch}{1.35}
    \begin{tabular*}{\linewidth}{@{\extracolsep{\fill}}lcc@{}}
    \toprule
                       & \multicolumn{2}{c}{Wasserstein $\wasserstein_2$} \\ \cmidrule(l){2-3}
    Method             & Trajectory ($\realNum^{40}$) & Last step ($\realNum^2$) \\ \midrule
    Ours               & $\mathbf{3.80 \pm 1.59}$      & $\mathbf{1.49 \pm 0.63}$   \\
    RS-Reg             & $4.70 \pm 2.09$      & $1.84 \pm 0.84$   \\
    $\alpha$-smoothing & $5.18 \pm 2.19$      & $2.04 \pm 0.84$   \\ \bottomrule
    \end{tabular*}
    \renewcommand{\arraystretch}{1}
    \captionof{table}{ \review{Comparison between smoothers and $h_\randvar{w}(x_i+\bm{\varepsilon})$ in terms of $\wasserstein_2$ for $200$ inputs $x_i$. } }
    \label{tab:wasserstein-data}
\end{minipage}
\hfill
\begin{minipage}[c][\ht\figbox+\dp\figbox]{0.48\textwidth}
    \centering
    \usebox{\figbox}
    \vspace{-0.8cm}
    \caption{ \review{Robustness lower-bounds (with one std. dev. shaded band) for $20$ distinct inputs $x_i$.} }
    \label{fig:robustness-bounds-traj-pred}
\end{minipage}
\end{figure}

\review{Finally, we perform a parameter analysis of the robustness lower-bounds in Theorem~\ref{thm:robustness-certification-clustered-smoothing} for various trimming parameters $\alpha \in \{ 0.0, 0.2, 0.4 \}$ and robustness radii $r\geq 0$, where the sets $\sR$ are constructed according to Algorithm~\ref{alg:sets-construction} enveloping a fraction $p=0.85$, for different inputs in our database. In Figure~\ref{fig:robustness-bounds-traj-pred}, we observe that certified lower bounds are higher when the robustness radius is smaller, as this allows smaller perturbations from the reference input, and when the trimming parameter $\alpha$ is higher (so that outliers are filtered out, so we can be more certain about where the smoother probability mass is concentrated). Despite the solid certification suggested by Figure~\ref{fig:robustness-bounds-traj-pred}, decreasing the conservativeness of the lower bounds in Theorem~\ref{thm:robustness-certification-clustered-smoothing} constitutes an interesting research direction.}

\textbf{Multi-modal quadrotor control. } 
Quadrotors are safety-critical systems where robustness directly affects deployability. We consider a navigation task in which the quadrotor must reach a goal while avoiding numerous obstacles (see Figure~\ref{fig:quadrotor_env_and_rollout}), under stochastic dynamics with full state observability \cite{badings2023robust}. Details of benchmark can be found in Appendix~\ref{subsec:quadrotor_details}. The task is inherently multi-modal: obstacles between the quadrotor and its goal admit passage on either side, and wind can shift probability mass between such modes.
\begin{figure}
    \centering
    \begin{subfigure}{0.3\textwidth}
        \centering
        \includegraphics[width=\textwidth]{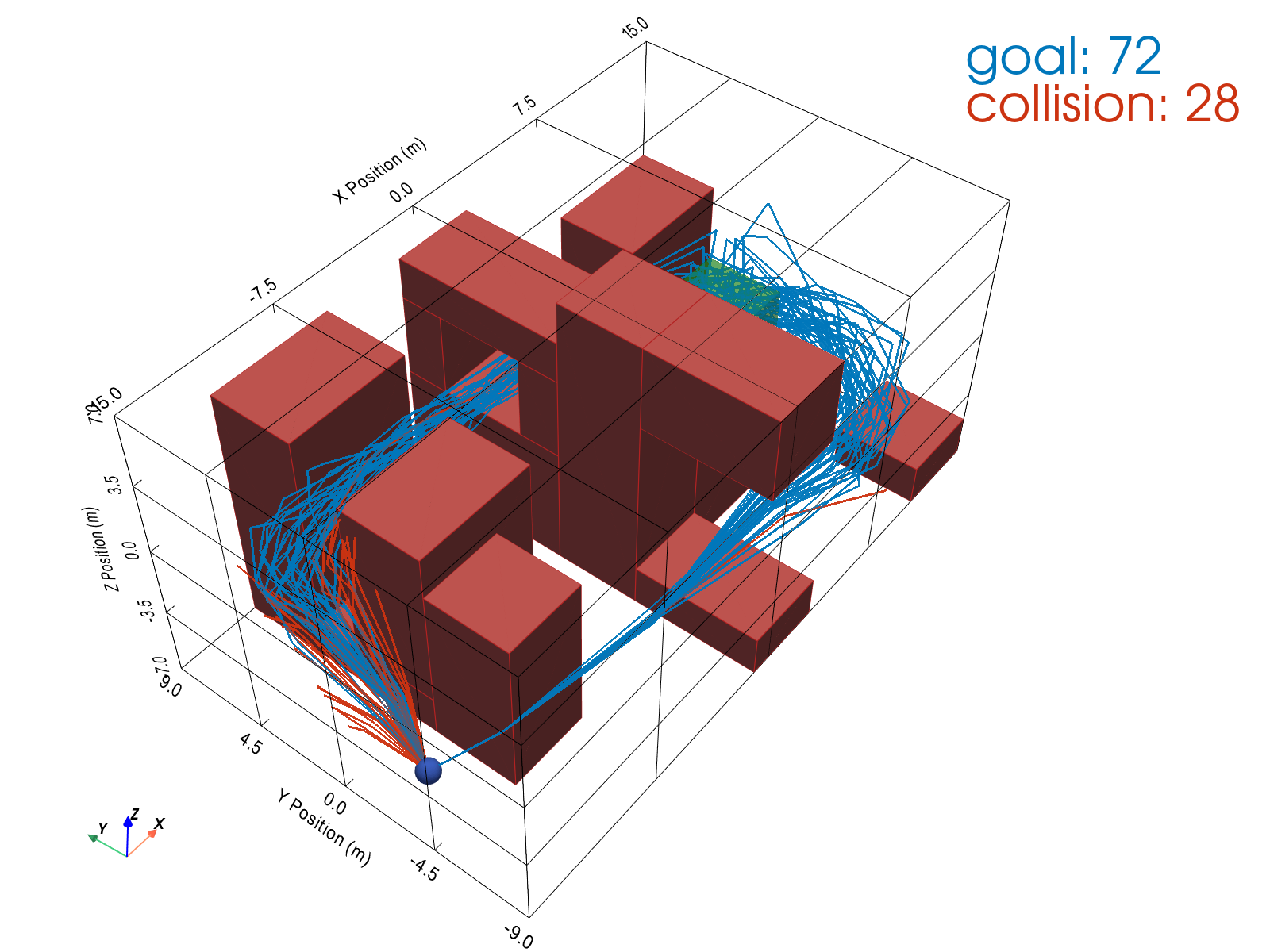}\\[0.3em]
        \includegraphics[width=0.85\textwidth]{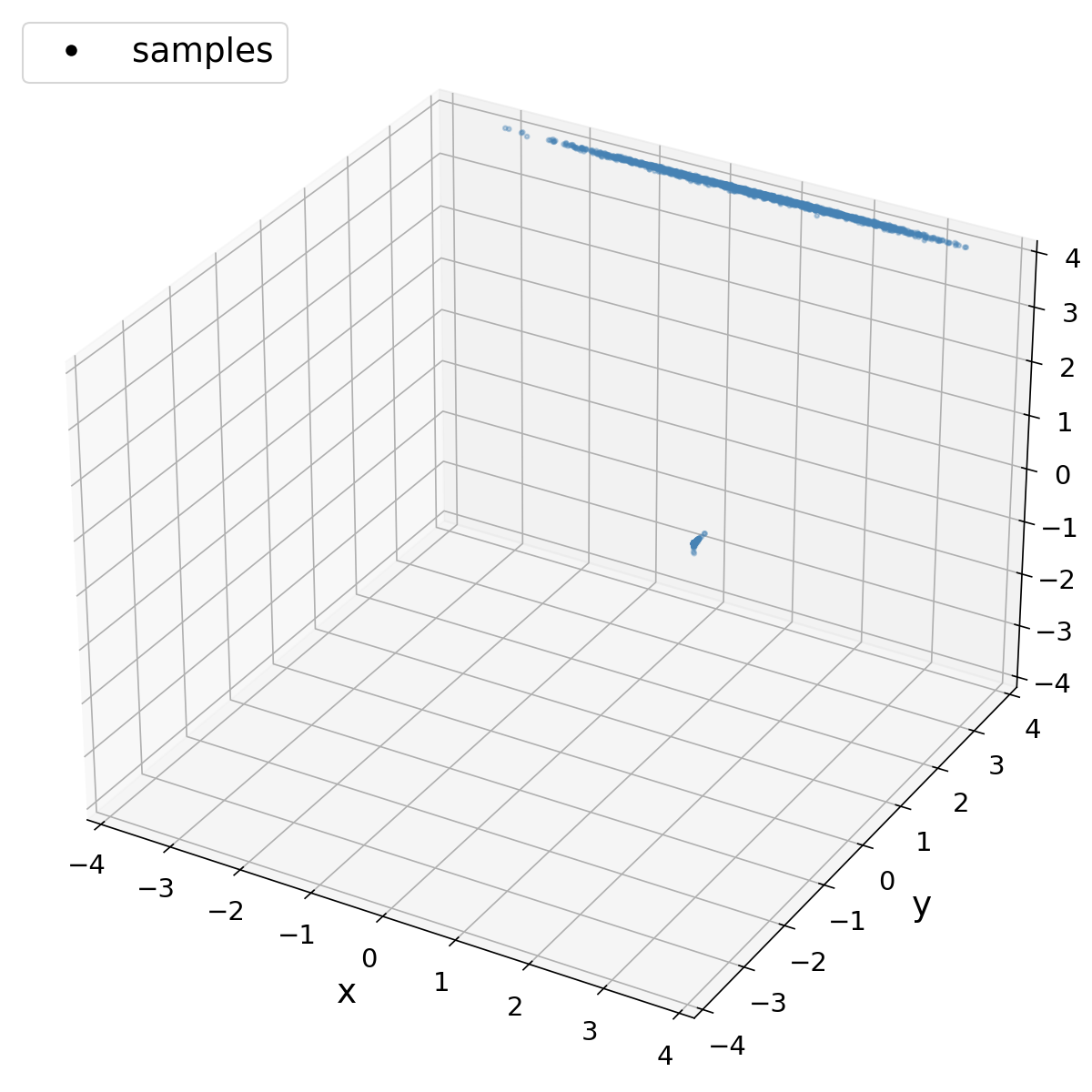}
        \caption{nominal policy}
    \end{subfigure}%
    \hfill
    \begin{subfigure}{0.3\textwidth}
        \centering
        \includegraphics[width=\textwidth]{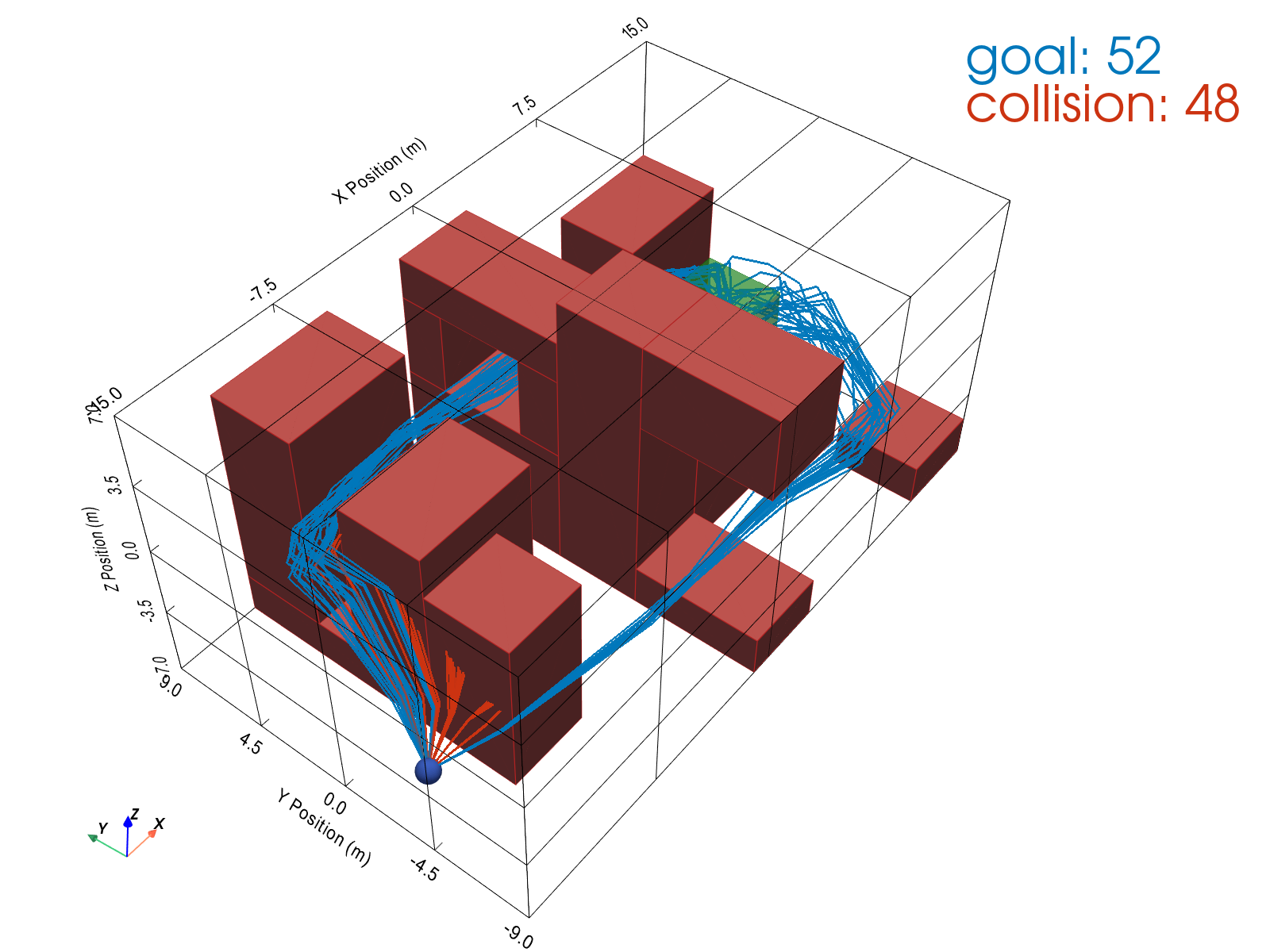}\\[0.3em]
        \includegraphics[width=0.85\textwidth]{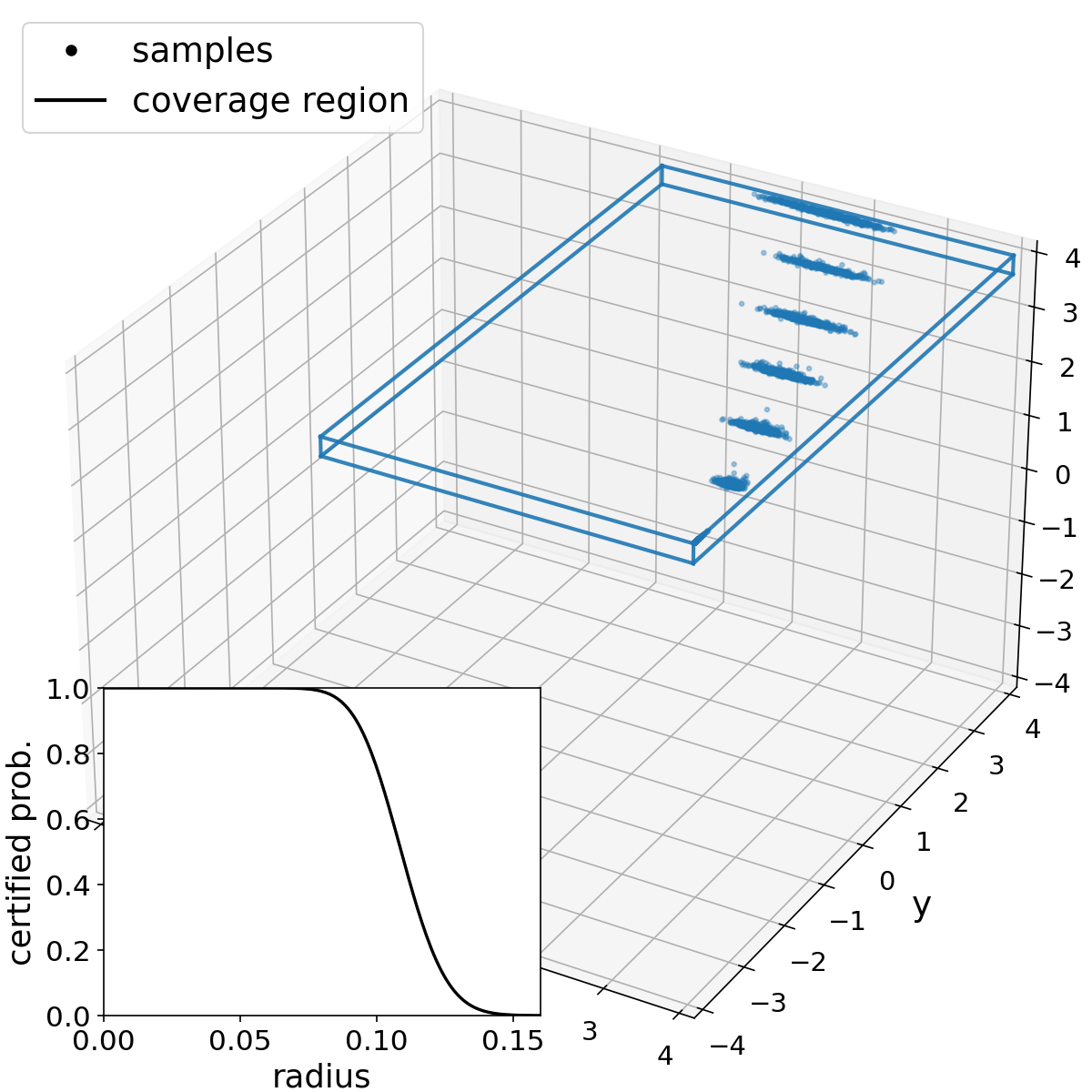}
        \caption{$\alpha$-smoothing \cite{rekavandi2024certified}}
    \end{subfigure}%
    \hfill
    \begin{subfigure}{0.3\textwidth}
        \centering
        \includegraphics[width=\textwidth]{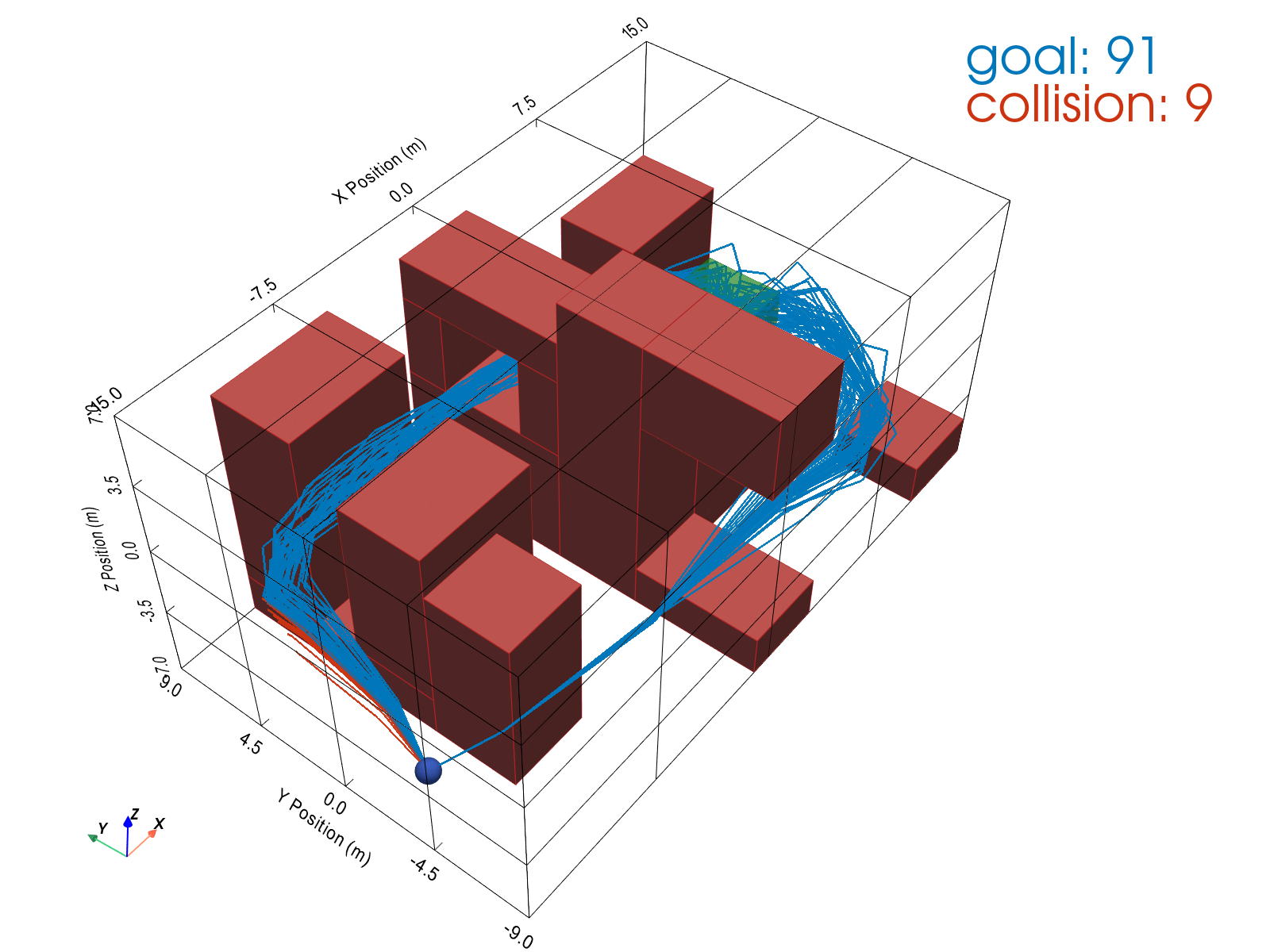}\\[0.3em]
        \includegraphics[width=0.85\textwidth]{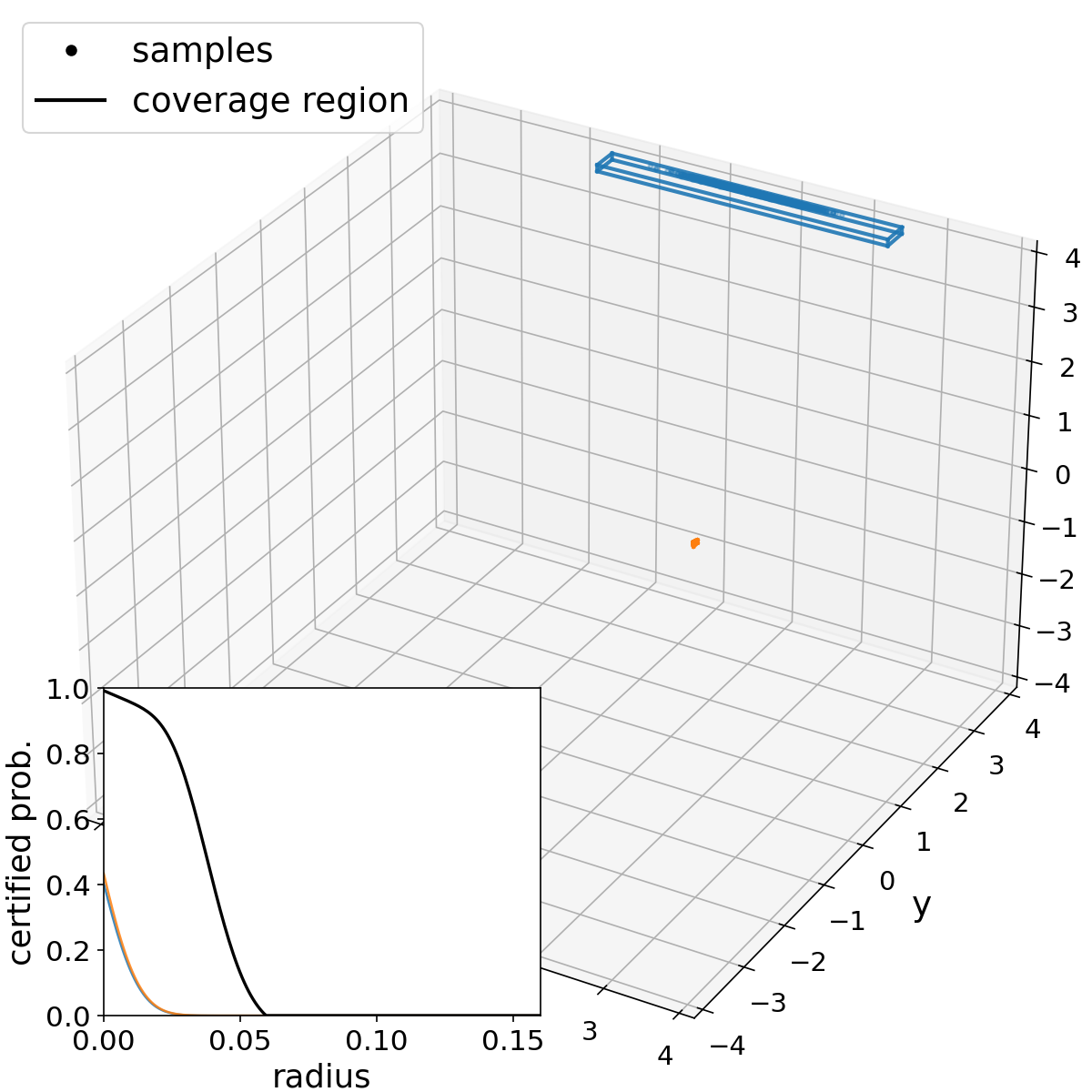}
        \caption{clustered $\alpha$-smoothing (ours)}
    \end{subfigure}
    \caption{
    Top row: rollouts of the quadrotor from the starting state (blue marker) where the controller predicts a bi-modal action distribution, corresponding to navigating through two ``holes'' in the obstacles towards the goal region (green). Bottom row: action distributions at the initial state $x = (-3.5, 0, -0.35, 0, 0.7, 0)$. The smoothed predictors are constructed using 30 samples. Insets: the certified probability of robustness of the smoothed policies with respect to their coverage regions (boxes) as a function of perturbation radius. The quantization in (b) arises from cross-mode smoothing with a limited number of samples (30 samples, $\alpha$-trimmed to 6) (analyzed further in Appendix~\ref{section:implicit_quantization}).}
    \label{fig:quadrotor_env_and_rollout}
    \vspace{-0.5cm}
\end{figure} 
The controller is an RL policy that outputs a Gaussian mixture over actions with predicted component means and mixture weights, and fixed variances \cite{ren2021probabilistic}. The prediction of a mixture enables navigation around obstacles through either path and rejection of wind-induced disturbances. This is precisely the setting where randomized and $\alpha$-smoothing fails: perturbations are averaged such that different modes are collapsed to their (weighted) mean, which often is to navigate directly into an obstacle. Clustered smoothing recovers meaningful guarantees by treating each cluster individually.

Visual inspection of policies (Figure~\ref{fig:quadrotor_env_and_rollout}) shows that the nominal policy is bi-modal, corresponding to navigating through one of two paths through the obstacles to the goal. However, the $\alpha$-smoothing destroys the multi-modality and ultimately predicts the (trimmed) average, which corresponds to navigating directly at the obstacles. Instead, our framework preserves the modes and only tightens the clusters. In the simulation shown in Figure~\ref{fig:quadrotor_env_and_rollout}, the nominal policy reaches the goal in 72 of the 100 trajectories sampled and crashes in 28 trajectories. The $\alpha$-smoothed policy crashes in 48 trajectories, while our clustered $\alpha$-smoothed policy only crashes in 9 trajectories. 
The certified probability as a function of perturbation radius in Figure~\ref{fig:quadrotor_env_and_rollout} suggests that our method is penalized more for larger input radii $r$, by the correction of $\ubar{p}_{\sV_m}, \bar{p}_{\sV_m}$ in \eqref{eq:adjustment-given-radius}. 

\review{Note that the guarantees of the smoothed predictors are with respect to the controller output at each time step, while the benchmark operates in a receding horizon fashion. Nonetheless, the empirical results show significant benefit in applying our method compared to both the base policy and regular $\alpha$-smoothing. Moreover, our approach is agnostic to the underlying predictor and is thus applicable to controllers such as MPC and motion planners such as RRT, both of which are important methods in robotic applications.}

\section{Limitations}\label{section:limitations}

\review{Randomized smoothing relies on repeated inference. In certain contexts (e.g. large neural networks), this may be prohibitive, in particular when the base predictor has a non-negligible computational cost. That our method is agnostic to the clustering algorithm is both a strength and a weakness: the quality of the result depends strongly on the choice, and thus, one will need to assess the clustering to choose a suitable algorithm. Theory-wise, we identify four primary limitations. First, defining $\sR$ as a union of disjoint and convex sets is a key component to enabling the guarantees, but this is restrictive, e.g., if the clusters cannot be cleanly separated as convex sets such as the famous half-moon toy example. The second limitation -- that certification is with respect to the smoothed predictor -- is key, as one cannot apply our method to robustify the prediction if parity with the base regressor is crucial, such as physics and dynamical systems \emph{simulations}. Third, in \eqref{eq:adjustment-given-radius}, if one wants to reduce the noise but maintain the probability bound, it is necessary to fix $r/\sigma$, and thus, $r$ must be proportional to the reciprocal of $\sigma$. Fourth, as we have shown empirically, the lower bounds from Theorem~\ref{thm:robustness-certification-clustered-smoothing} can become more conservative with the inclusion of new clusters. Consequently, either a lower certified radius or a larger noise level may be needed to guarantee robustness in highly multi-modal settings.}

\section{Conclusion}\label{sec:conclusion}

In this work, we addressed a key limitation of randomized smoothing for regression: the collapse of multi-modal predictions under averaging. We proposed \emph{clustered $\alpha$-smoothing}, which preserves multi-modality by applying $\alpha$-trimmed smoothing within clusters and combining the results as a mixture. We established probabilistic robustness guarantees and demonstrated significant empirical gains, reducing the Wasserstein distance by \review{$27\%$} in trajectory prediction and the collision rate by $81\%$ in quadrotor control. 
Several directions remain for future work. In particular, our current approach assumes a fixed partition of the output space $\bsV$. However, recent work in randomized smoothing \cite{sukenik2021intriguing, alfarra2022data} suggests that input-dependent smoothing distributions can improve performance. Extending our framework to allow input-dependent smoothing and partitions $\bsV(x)$ is therefore a promising and challenging direction.



{\footnotesize
\bibliographystyle{plain}
\bibliography{main}
}

\clearpage
\appendix

\section{Technical Proofs}\label{section:proofs}

\subsection{Proof of Theorem \ref{thm:robustness-certification-clustered-smoothing}}
As Theorem \ref{thm:robustness-certification-clustered-smoothing} is built from a union-bound argument over $\sR_m$, we start by introducing the following Lemma~\ref{lem:robustness-per-region-certification-clustered-smoothing}, which bounds the probability $\Prob_{(\randvar{w}, \bm{\varepsilon}_1, \ldots, \bm{\varepsilon}_N)} \left(\smoothed{\sH}_{N, \alpha, \bsV}(x + \delta) \in \sR_m \right)$. For notational convenience, we write $\ubar{\bm{\varepsilon}}$ to mean $\bm{\varepsilon}_1, \ldots, \bm{\varepsilon}_N$ for the remainder of this appendix.
\begin{lemma}\label{lem:robustness-per-region-certification-clustered-smoothing}
    Let $\bsV = \left\{\sV_1, \dots, \sV_M\right\}$ be a partition of the space $\sX$. Assume that $\check{p}_{\sV_m} \leq \Prob_{(\randvar{w}, \bm{\varepsilon})}\left(h_\randvar{w}(x + \bm{\varepsilon}) \in \sV_m \right) \leq \hat{p}_{\sV_m}$ and $\Prob_{(\randvar{w}, \bm{\varepsilon})}\left(h_\randvar{w}(x + \bm{\varepsilon}) \in \sR_m \right) \geq \check{p}_{\sR_m}$ for a convex set $\sR_m \subset \sV_m$. For $r \geq 0$, define
    \begin{equation}\label{eq:adjustment-given-radius-per-region}
        \ubar{p}_{\sV_m} = \Phi\left(\Phi^{-1}(\check{p}_{\sV_m}) - \frac{r}{\sigma}\right), \; \bar{p}_{\sV_m} = \Phi\left(\Phi^{-1}(\hat{p}_{\sV_m}) + \frac{r}{\sigma}\right), \; \text{ and } \;
        \ubar{p}_{\sR_m} = \Phi\left(\Phi^{-1}(\check{p}_{\sR_m}) - \frac{r}{\sigma}\right).
    \end{equation}
    Then, for any $\delta \in \realNum^d$ such that $\norm{\delta}_2 \leq r$, it holds that
    \begin{align}
        &\Prob_{(\randvar{w}, \ubar{\bm{\varepsilon}})} \left(\smoothed{\sH}_{N, \alpha, \bsV}(x + \delta) \in \sR_m \right) \nonumber \\
        &\qquad \geq \sum_{s=1}^{N} \frac{s}{N} \sum_{j=s-\floor{\alpha s}}^{s} f_{s,j}\left(\frac{\ubar{p}_{\sR_m}}{\bar{p}_{\sV_m}} \right) \min\left\{ f_{N,s}\left(\ubar{p}_{\sV_m}\right), \; f_{N,s}\left(\bar{p}_{\sV_m}\right) \right\}.
    \end{align}
\end{lemma}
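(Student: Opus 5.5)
Fix $\delta$ with $\norm{\delta}_2\le r$ and write $\randvar{y}_i = h_\randvar{w}(x+\delta+\bm{\varepsilon}_i)$. I treat the $\randvar{y}_i$ as i.i.d. (fresh $\randvar{w}$ per sample, as in Algorithm~\ref{alg:sets-construction}). Set $q_{\sV}=\Prob(\randvar{y}_1\in\sV_m)$ and $q_{\sR}=\Prob(\randvar{y}_1\in\sR_m)$.

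The first step is to transfer the assumed bounds at $x$ to the perturbed point $x+\delta$. I will invoke the Neyman--Pearson lemma for Gaussian smoothing \cite{cohen2019certified}. Applied to the indicator functions $\indicator_{\sV_m}(h_\randvar{w}(\cdot))$ and $\indicator_{\sR_m}(h_\randvar{w}(\cdot))$, averaged over $\randvar{w}$, it says that $x'\mapsto\Phi^{-1}(\Prob(h_\randvar{w}(x'+\bm{\varepsilon})\in A))$ is $1/\sigma$-Lipschitz. Hence $q_{\sV}\in[\ubar{p}_{\sV_m},\bar{p}_{\sV_m}]$ and $q_{\sR}\ge\ubar{p}_{\sR_m}$.

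Next I decompose the event on the cluster size $S=|I_m|$, which is $\mathrm{Bin}(N,q_{\sV})$, so $\Prob(S=s)=f_{N,s}(q_{\sV})$. By Definition~\ref{def:clustered-alpha-smoothing}, the mixture selects $\smoothed{\sH}_{N,\alpha,\sV_m}$ with probability $s/N$ given $S=s$. Since $\sR_m\subset\sV_m$ and the $\sV_l$ are disjoint, the other mixture components lie outside $\sR_m$ and contribute nothing, which would need care only if they could land there. Therefore
\[
\Prob(\smoothed{\sH}_{N,\alpha,\bsV}(x+\delta)\in\sR_m)\ \ge\ \sum_{s=1}^N \tfrac{s}{N}\, f_{N,s}(q_{\sV})\,\Prob\big(\smoothed{\sH}_{N,\alpha,\sV_m}\in\sR_m\mid S=s\big).
\]
Conditioned on $S=s$, the $s$ samples in cluster $m$ are i.i.d. with law $\randvar{y}_1$ conditioned on $\sV_m$. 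Each therefore lies in $\sR_m$ with probability $q_{\sR}/q_{\sV}\ge \ubar{p}_{\sR_m}/\bar{p}_{\sV_m}$.

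The key step, following the $\alpha$-smoothing argument of \cite{rekavandi2024certified}, is the claim that if at least $s-\floor{\alpha s}$ of the $s$ samples lie in the convex set $\sR_m$, then the trimmed mean lies in $\sR_m$. This is the main obstacle. The trimmed mean is taken coordinatewise over order statistics, so membership is not automatic for a general convex set. It holds cleanly for hyperrectangles, which are what Algorithm~\ref{alg:sets-construction} uses in the experiments. With at most $\floor{\alpha s}$ points outside the box, in each coordinate the retained order statistics between ranks $\floor{\alpha s}+1$ and $s-\floor{\alpha s}$ all lie within that coordinate's interval, so their average does too. For general convex $\sR_m$ I would follow the cited lemma, or restrict to boxes if needed. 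Given the claim, the number of samples in $\sR_m$ is $\mathrm{Bin}(s,q_{\sR}/q_{\sV})$. Its upper tail $\sum_{j\ge s-\floor{\alpha s}} f_{s,j}(\cdot)$ is increasing in the success probability, so I can substitute the lower bound $\ubar{p}_{\sR_m}/\bar{p}_{\sV_m}$.

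Finally, $q_{\sV}$ is unknown in $[\ubar{p}_{\sV_m},\bar{p}_{\sV_m}]$, so I bound $f_{N,s}(q_{\sV})$ from below. As a function of $p$, the binomial pmf is unimodal: it increases up to $s/N$ and decreases after. Its minimum over an interval is therefore attained at an endpoint, which gives $\min\{f_{N,s}(\ubar{p}_{\sV_m}),f_{N,s}(\bar{p}_{\sV_m})\}$. Since every term is nonnegative, replacing each factor by its lower bound termwise yields the stated inequality.
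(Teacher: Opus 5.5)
Your argument follows the paper's proof step for step: Neyman--Pearson transfer of the three bounds to $x+\delta$, conditioning on $|I_m|=s$, the mixture weight $s/N$, a binomial tail for the number of cluster samples in $\sR_m$ with success probability at least $\ubar{p}_{\sR_m}/\bar{p}_{\sV_m}$, and the endpoint minimum of the unimodal pmf $f_{N,s}$. Two side remarks on your write-up:
\begin{itemize}
\item Your explicit i.i.d.\ assumption (a fresh $\randvar{w}$ for every sample) is genuinely needed. The paper uses it tacitly when it calls $|I_m|$ binomial.
\item Your stated reason for ignoring the other mixture components is not right. A trimmed mean of samples from $\sV_l$ need not stay in the generally non-convex cell $\sV_l$. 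No reason is needed, though: dropping those events only lowers the probability, so the inequality survives.
\end{itemize}

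The step you flag is exactly where the paper's proof breaks. The paper argues that when at least $s-\floor{\alpha s}$ samples lie in $\sR_m$, the trimmed average ``only contains points in $\sR_m$'', and then concludes by convexity. With element-wise order statistics, the averaged vectors are not sample points. Once $\floor{\alpha s}\ge 1$, the claim fails for general convex sets, and so does the lemma as stated. Take:
\begin{itemize}
\item $M=1$ and $\sV_1=\sY=[0,1]^3$;
\item $\sR_1=\{y\in\sY : y^{(1)}+y^{(2)}+y^{(3)}=1\}$;
\item $h_{\randvar{w}}(x)=e_{\randvar{w}}$, with $\randvar{w}$ uniform on $\{1,2,3\}$ and independent of $x$;
\item $r=0$, $N=3$ and $\alpha=0.4$.
\end{itemize}
The hypotheses hold with $\check{p}_{\sV_1}=\hat{p}_{\sV_1}=\check{p}_{\sR_1}=1$, so the right-hand side equals $1$. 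However, with probability $3!/3^3=2/9$ the three samples are distinct vertices. Their trimmed mean is then the coordinate-wise median $0\notin\sR_1$. Every other configuration gives a vertex, so the left-hand side is $7/9$.

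So deferring to the cited lemma cannot close the general case. The statement needs $\sR_m$ to be an axis-aligned box, which is what the experiments use. For boxes, your per-coordinate rank argument is correct and complete. When $\alpha N<1$ no trimming occurs, and convexity alone does suffice.
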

\begin{proof}
    First, we observe that using the Neyman-Pearson's Lemma \cite{cohen2019certified}, for any $\delta \in \realNum^d$ such that $\norm{\delta}_2 \leq r$,
    \begin{equation}\label{eq:proof-lower-bound-partition}
        \Prob_{(\randvar{w}, \bm{\varepsilon})}\left(h_\randvar{w}(x + \delta + \bm{\varepsilon}) \in \sV_m\right) \geq \Phi\left(\Phi^{-1}(\check{p}_{\sV_m}) - \frac{\norm{\delta}_2}{\sigma}\right) \geq \Phi\left( \Phi^{-1}(\check{p}_{\sV_m}) + \frac{r}{\sigma}\right) = \ubar{p}_{\sV_m},
    \end{equation}
    \begin{equation}\label{eq:proof-upper-bound-partition}
        \Prob_{(\randvar{w}, \bm{\varepsilon})}\left(h_\randvar{w}(x + \delta + \bm{\varepsilon}) \in \sV_m\right) \leq \Phi\left(\Phi^{-1}(\hat{p}_{\sV_m}) + \frac{\norm{\delta}_2}{\sigma}\right) \leq \Phi\left(\Phi^{-1}(\hat{p}_{\sV_m}) + \frac{r}{\sigma}\right) = \bar{p}_{\sV_m},
    \end{equation}
    and
    \begin{equation}\label{eq:proof-lower-bound-region}
        \Prob_{(\randvar{w}, \bm{\varepsilon})}\left(h_\randvar{w}(x + \delta + \bm{\varepsilon}) \in \sR_m\right) \geq \Phi\left(\Phi^{-1}(\check{p}_{\sR_m}) - \frac{\norm{\delta}_2}{\sigma}\right) \geq \Phi\left(\Phi^{-1}(\check{p}_{\sR_m}) - \frac{r}{\sigma}\right) = \ubar{p}_{\sR_m},
    \end{equation}
    due to the monotonicity of the function $\Phi$. Further, because by construction $\sR_m \subset \sV_m$, we have that
    \begin{equation}\label{eq:proof-lower-bound-conditional}
        \Prob_{(\randvar{w}, \bm{\varepsilon})}\left(h_\randvar{w}(x + \delta + \bm{\varepsilon}) \in \sR_m \mid h_\randvar{w}(x + \delta + \varepsilon) \in \sV_m \right) \geq \frac{\ubar{p}_{\sR_m}}{\bar{p}_{\sV_m}}.
    \end{equation}
    Having these probability bounds for $h_\randvar{w}(x + \delta + \bm{\varepsilon})$, we can now proceed to study the probability of the event $\smoothed{\sH}_{N,\alpha,\bsV}(x + \delta) \in \sR_m$. For any $x' \in \sX$, by applying the Law of Total Probability twice, it holds that
    \begin{align}
        &\Prob_{(\randvar{w}, \ubar{\bm{\varepsilon}})} \left(\smoothed{\sH}_{N, \alpha, \bsV}(x') \in \sR_m\right)= \nonumber \\
        &\Prob_{(\randvar{w}, \ubar{\bm{\varepsilon}})}\left(\smoothed{\sH}_{N, \alpha, \bsV}(x') \in \sR_m \mid \smoothed{\sH}_{N, \alpha, \sV_m}(x') \in \sR_m\right)\Prob_{(\randvar{w}, \ubar{\bm{\varepsilon}})} \left(\smoothed{\sH}_{N, \alpha, \sV_m}(x') \in \sR_m\right) + \nonumber \\
        &\qquad \underbrace{\Prob_{(\randvar{w}, \ubar{\bm{\varepsilon}})}\left(\smoothed{\sH}_{N, \alpha, \bsV}(x') \in \sR_m \mid \smoothed{\sH}_{N, \alpha, \sV_m}(x') \notin \sR_m \right)}_{=0}\Prob_{(\randvar{w}, \ubar{\bm{\varepsilon}})} \left(\smoothed{\sH}_{N, \alpha, \sV_m}(x') \notin \sR_m\right)= \nonumber \\
        &\sum_{s=0}^N \Prob_{(\randvar{w}, \ubar{\bm{\varepsilon}})}\left(\smoothed{\sH}_{N, \alpha, \bsV}(x') \in \sR_m \mid \smoothed{\sH}_{N,\alpha, \sV_m}(x') \in \sR_m, |I_m|=s \right)\cdot\nonumber\\
        &\qquad \Prob_{(\randvar{w}, \ubar{\bm{\varepsilon}})}\left( \smoothed{\sH}_{N,\alpha, \sV_m}(x') \in \sR_m \mid |I_m|=s \right) \Prob_{(\randvar{w}, \ubar{\bm{\varepsilon}})}(|I_m|=s).\label{eq:proof-main-equation-component-wise}
    \end{align}
    
    In particular, let $x' = x + \delta$. We will examine each of these terms separately. Recall that, by construction, the random set of indexes $I_m$ is given by $I_m = \left\{ i\in [N] : h_\randvar{w}(x + \delta + \varepsilon_i) \in \sV_m \right\}$, so that $|I_j| \sim \text{Binomial}\left( N, \Prob_{(\randvar{w},\bm{\varepsilon})}\left( h_\randvar{w}(x + \delta + \bm{\varepsilon}) \in \sV_m \right) \right)$. Therefore,
    \begin{equation}\label{eq:proof-control-component-1}
        \Prob_{(\randvar{w}, \ubar{\bm{\varepsilon}})}(|I_m|=s) \geq \min\left\{ f_{N,s}\left( \ubar{p}_{\sV_m} \right), \; f_{N,s}\left( \bar{p}_{\sV_m} \right) \right\},
    \end{equation}
    as $\Prob_{(\randvar{w},\bm{\varepsilon})}\left( h_\randvar{w}(x + \delta + \bm{\varepsilon}) \in \sV_m \right) \in \left[ \ubar{p}_{\sV_m}, \bar{p}_{\sV_m} \right]$ by \eqref{eq:proof-lower-bound-partition} and \eqref{eq:proof-upper-bound-partition}. That the minimum is attained at $\ubar{p}_{\sV_m}$ or $\bar{p}_{\sV_m}$ follows from the fact that the continuous function $f_{N,s}(p)=\binom{N}{s}p^s(1-p)^{N-s}$ has a positive derivative $f'(p) > 0$ for $p < \frac{s}{N}$ (resp. $f'(p) < 0$ for $p > \frac{s}{N}$). Hence,  $p=\frac{s}{N}$ is the maximizer within $[0,1]$ and the minimum of $f(p)$ for $p \in [\ubar{p}, \bar{p}]$ must be attained at either $p=\ubar{p}$ or $p=\bar{p}$.

    We then move on to the event $\smoothed{\sH}_{N, \alpha, \bsV}(x + \delta) \in \sR_m$ given $\smoothed{\sH}_{N,\alpha, \sV_m}(x + \delta) \in \sR_m, |I_m|=s$. Trivially, conditioned on $\smoothed{\sH}_{N,\alpha, \sV_m}(x + \delta) \in \sR_m$, the event $\smoothed{\sH}_{N, \alpha, \bsV}(x + \delta) \in \sR_m$ takes place if the component $\smoothed{\sH}_{N,\alpha, \sV_m}$ is sampled from the mixture, which by construction is equal to $\frac{|I_m|}{N}$. Thus,
    \begin{equation}\label{eq:proof-control-component-2}
        \Prob_{(\randvar{w}, \ubar{\bm{\varepsilon}})} \left( \smoothed{\sH}_{N, \alpha, \bsV}(x + \delta) \in \sR_m \mid \smoothed{\sH}_{N,\alpha, \sV_m}(x + \delta) \in \sR_m, |I_m|=s \right) = \frac{s}{N}.
    \end{equation}
    
    Finally, to conclude, we need to lower-bound the probability of the event $\smoothed{\sH}_{N,\alpha, \sV_m}(x + \delta) \in \sR_m$ given $|I_m|=s$. We observe that conditioned on the knowledge $|I_m|=s$, the random variable $\smoothed{\sH}_{N,\alpha, \sV_m}(x + \delta)$ conditioned on $|I_m|=s$ behaves as the $\alpha$-smoother introduced in \cite{rekavandi2024certified} with number of samples $s$. Therefore, we follow a similar proof. Let $Z_m = \sum_{i=1}^s \indicator_{h_\randvar{w}(x + \delta + \bm{\varepsilon}_i) \in \sR_m \mid h_\randvar{w}(x + \delta + \bm{\varepsilon}_i) \in \sV_m}$, thus 
    \begin{equation}\label{eq:proof-def-zm-variable}
        Z_m \sim \text{Binomial}\left( s, \Prob_{(\randvar{w}, \bm{\varepsilon})}\left( h_\randvar{w}(x + \delta + \bm{\varepsilon}) \in \sR_m \mid h_\randvar{w}(x + \delta + \bm{\varepsilon}) \in \sV_m \right) \right).
    \end{equation}
    From the Law of Total Probability, it holds that
    \begin{align}\label{eq:eq-aux-proof-total-prob}
        &\Prob_{(\randvar{w}, \ubar{\bm{\varepsilon}})} \left( \smoothed{\sH}_{N,\alpha, \sV_m}(x + \delta) \in \sR_m \mid |I_m|=s \right) = \nonumber \\ 
        &\qquad\quad \sum_{z=0}^s \Prob_{(\randvar{w}, \ubar{\bm{\varepsilon}})} \left( \smoothed{\sH}_{N,\alpha, \sV_m}(x + \delta) \in \sR_m \mid |I_m|=s, Z_m=z \right)\Prob(Z_m=z),
    \end{align}
    Now, note that the \review{event $z \geq s - \floor{\alpha s}$} means that the $\alpha$-trimming only discards points out of $\sR_m$ (i.e., the average in \eqref{eq:intra-cluster-alpha-trimming-def} only contains points in $\sR_m$), thus it must hold that $\smoothed{\sH}_{N,\alpha, \sV_m}(x + \delta) \in \sR_m$ conditioned on the given events by the convexity of the set $\sR_m$. On the other hand, \review{when $z < N - \floor{\alpha s}$}, the average in \eqref{eq:intra-cluster-alpha-trimming-def} contains points out of $\sR_m$. In the absence of more information about how far those points are, we have no guarantees that the average over them will belong to $\sR_m$ (as only one point suffices to displace the average arbitrarily far from $\sR_m$ \cite{rekavandi2024rs}). Therefore, conditioned on these given events, we may conservatively consider that $\smoothed{\sH}_{N, \alpha, \sV_m}(x + \delta) \notin \sR_m$. Thus, from \eqref{eq:eq-aux-proof-total-prob},
    \begin{align}
        &\Prob_{(\randvar{w}, \ubar{\bm{\varepsilon}})} \left( \smoothed{\sH}_{N,\alpha, \sV_m}(x + \delta) \in \sR_m \mid |I_m|=s \right) = \nonumber \\
        &\qquad \sum_{z=0}^{s-\floor{\alpha s}-1} \underbrace{\Prob_{(\randvar{w}, \ubar{\bm{\varepsilon}})} \left( \smoothed{\sH}_{N,\alpha, \sV_m}(x + \delta) \in \sR_m \mid |I_m|=s, Z_m=z \right)}_{\geq 0}\Prob_{Z_m}(Z_m = z) + \nonumber \\
        & \qquad\quad \sum_{z=s-\floor{\alpha s}}^{s} \underbrace{ \Prob_{(\randvar{w}, \ubar{\bm{\varepsilon}})} \left( \smoothed{\sH}_{N,\alpha, \sV_m}(x + \delta) \in \sR_m \mid |I_m|=s, Z_m=z \right) }_{= 1}\Prob_{Z_m}(Z_m = z) \nonumber \\
        &\geq \sum_{z=s-\floor{\alpha s}}^{s} \Prob_{Z_m}(Z_m = z) \nonumber \\
        &\geq \sum_{j=s-\floor{\alpha s}}^{s} f_{s,j} \left( \frac{\ubar{p}_{\sR_m}}{\bar{p}_{\sV_m}} \right),\label{eq:proof-control-component-3}
    \end{align}
    from the fact that $Z_m$ is distributed as the binomial in \eqref{eq:proof-def-zm-variable} and its success probability is lower bounded by \eqref{eq:proof-lower-bound-conditional}. The proof is concluded by replacing \eqref{eq:proof-control-component-1}, \eqref{eq:proof-control-component-2}, and \eqref{eq:proof-control-component-3} in \eqref{eq:proof-main-equation-component-wise}.
\end{proof}

We are now ready to prove Theorem~\ref{thm:robustness-certification-clustered-smoothing}.
\begin{proof}
    By assumption, we have $\sR_l \cap \sR_{l'} = \emptyset$ for $l, l' \in \sL$ where $l \neq l'$. Therefore, we can employ a union-bound argument over Lemma~\ref{lem:robustness-per-region-certification-clustered-smoothing} for the following bound
    \begin{align}\label{eq:uncorrected-clustered-smoothing}
        &\Prob_{(\randvar{w}, \ubar{\bm{\varepsilon}})} \left(\smoothed{\sH}_{N, \alpha, \bsV}(x + \delta) \in \tilde{\sR} \right) \nonumber \\
        &\qquad = \sum_{l \in \sL} \Prob_{(\randvar{w}, \ubar{\bm{\varepsilon}})} \left(\smoothed{\sH}_{N, \alpha, \bsV}(x + \delta) \in \sR_l \right) \nonumber \\
        &\qquad \geq \sum_{l \in \sL} \inf_{p_{\sV_l} \in [\ubar{p}_{\sV_l}, \bar{p}_{\sV_l}]} \sum_{s=1}^{N} \frac{s}{N} \sum_{j=s-\floor{\alpha s}}^{s} f_{s,j} \left( \frac{\ubar{p}_{\sR_l}}{p_{\sV_l}} \right) f_{N,s} \left(p_{\sV_l}\right) \nonumber \\
        &\qquad = \inf_{p_{\sV_m} \in [\ubar{p}_{\sV_m}, \bar{p}_{\sV_m}]} \sum_{l \in \sL} \sum_{s=1}^{N} \frac{s}{N} \sum_{j=s-\floor{\alpha s}}^{s} f_{s,j} \left( \frac{\ubar{p}_{\sR_l}}{p_{\sV_l}} \right) f_{N,s} \left(p_{\sV_l}\right),
    \end{align}
    where the last infimum is for all $m \in [M]$. Naturally, one would compute this infimum independently per $l \in \sL$. However, extracting it reveals that we can add the constraint $\sum_{m = 1}^M p_{\sV_m} = 1$, i.e., ensure that the probability over all partition cells $\sV_m$ sum to one. Thus, we arrive at    
    \begin{align}\label{eq:corrected-clustered-smoothing}
        &\Prob_{(\randvar{w}, \ubar{\bm{\varepsilon}})} \left(\smoothed{\sH}_{N, \alpha, \bsV}(x + \delta) \in \tilde{\sR} \right) \nonumber \\
        &\qquad \geq \inf_{\substack{p_{\sV_m} \in [\ubar{p}_{\sV_m}, \bar{p}_{\sV_m}] \\ \sum_{m=1}^M p_{\sV_m}=1}} \sum_{l \in \sL} \sum_{s=1}^{N} \frac{s}{N} \sum_{j=s-\floor{\alpha s}}^{s} f_{s,j} \left( \frac{\ubar{p}_{\sR_l}}{p_{\sV_l}} \right) f_{N,s} \left(p_{\sV_l}\right),
    \end{align}
    which concludes the proof.
\end{proof}

\subsection{Proof of Proposition~\ref{prop:prob_bound_with_anchor_points}}
Before proving Proposition~\ref{prop:prob_bound_with_anchor_points}, we first prove an auxiliary Lemma.
\begin{lemma}\label{lemma:lower-bound-as-lp}
    Let $g:[\ell,u]\to\realNum$ be a Lipschitz continuous function with Lipschitz constant $\sL_g$. Let $\left\{ \evp{1},\dots, \evp{K} \right\}$ be a set of $K \in \natNum_{>0}$ anchor points such that $\ell=\evp{1}<\evp{2}<\dots<\evp{K}=u$. Let $\operatorname{conv}_K(g)(p)$ be defined as
    \begin{equation}\label{eq:conv-envelope-approximation-definition}
        \operatorname{conv}_K(g)(p) = \min_{\bm{\lambda} \in \Delta_K} \left\{ \sum_{k=1}^K \evlambda{k}g(\evp{k}) \text{ s.t. } \sum_{k=1}^K \evlambda{k}\evp{k} = p \right\}.
    \end{equation}
    Then,
    \begin{equation}
        \operatorname{conv}_K(g)(p) \leq g(p) + \sL_g h_K,
    \end{equation}
    where $h_K = \max_{p'\in[\ell, u]} \min_{k\in[K]} |p' - \evp{k}|$.
\end{lemma}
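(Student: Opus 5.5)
The plan is to bound $\operatorname{conv}_K(g)(p)$ from above by exhibiting one feasible $\bm{\lambda}\in\Delta_K$ in \eqref{eq:conv-envelope-approximation-definition}. Since the minimum is over all feasible $\bm{\lambda}$, any particular choice gives an upper bound. Fix $p\in[\ell,u]$. Because $\evp{1}=\ell$ and $\evp{K}=u$, there is an index $k$ with $\evp{k}\le p\le \evp{k+1}$; if $p$ coincides with an anchor, take $\bm{\lambda}$ to be the corresponding unit vector, which gives $\operatorname{conv}_K(g)(p)\le g(p)$ directly. Otherwise I will use the two-point interpolation weights
\[
\evlambda{k}=\frac{\evp{k+1}-p}{\evp{k+1}-\evp{k}},\qquad \evlambda{k+1}=\frac{p-\evp{k}}{\evp{k+1}-\evp{k}},
\]
with all other weights equal to zero. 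These are nonnegative, sum to one, and satisfy $\evlambda{k}\evp{k}+\evlambda{k+1}\evp{k+1}=p$, so they are feasible.

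With this choice,
\[
\operatorname{conv}_K(g)(p)-g(p)\le \evlambda{k}\bigl(g(\evp{k})-g(p)\bigr)+\evlambda{k+1}\bigl(g(\evp{k+1})-g(p)\bigr).
\]
Lipschitz continuity gives
\[
\operatorname{conv}_K(g)(p)-g(p)\le \sL_g\bigl(\evlambda{k}(p-\evp{k})+\evlambda{k+1}(\evp{k+1}-p)\bigr)=2\sL_g\,\frac{(p-\evp{k})(\evp{k+1}-p)}{\evp{k+1}-\evp{k}}.
\]
Write $a=p-\evp{k}$ and $b=\evp{k+1}-p$. The right-hand side is then $2\sL_g\,ab/(a+b)$.

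It remains to show $2ab/(a+b)\le \min(a,b)$ up to the constant we can afford, and this is the step that needs care. The definition of $h_K$ gives $\min(a,b)\le h_K$. The inequality $2ab/(a+b)\le\min(a,b)$ holds, but only with slack: if $a\le b$, then $2ab/(a+b)\le 2ab/(2a)=b$, which bounds by the larger gap rather than the smaller one. The correct reduction is $2ab/(a+b)=2a\cdot\frac{b}{a+b}\le 2a=2\min(a,b)$, which costs a factor of $2$.

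To remove that factor, I will not use the weighted two-point estimate. Instead I will choose $\bm{\lambda}$ as the unit vector at whichever of $\evp{k},\evp{k+1}$ is nearest to $p$. That choice is not feasible, because the equality constraint fails. So I instead bound the interpolant through its segment: $\sum_k\evlambda{k}g(\evp{k})$ is the linear interpolant $L(p)$. Bound $L(p)-g(p)\le |L(p)-g(q)|+|g(q)-g(p)|$, where $q$ is the nearest anchor. Since $L(q)=g(q)$ and the slope of $L$ is at most $\sL_g$ in absolute value (it is a difference quotient of $g$), we get $|L(p)-L(q)|\le\sL_g|p-q|$. Together with $|g(q)-g(p)|\le \sL_g|p-q|$, this again yields $2\sL_g h_K$.

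The factor $2$ seems genuinely intrinsic to this approach. The tent example $g(p)=\sL_g|p-c|$ gives a gap of $\sL_g(b-a)$-type terms, which on checking stays within $\sL_g\min(a,b)\cdot 2ab/((a+b)\min(a,b))$. So the main obstacle is establishing the stated constant $\sL_g h_K$ rather than $2\sL_g h_K$. I would first try to sharpen the bound $2ab/(a+b)$ using $h_K\ge \max_k(\evp{k+1}-\evp{k})/2\ge (a+b)/2$. This gives $2ab/(a+b)\le 2ab/(2h_K)\cdot\ldots$; more simply, $ab/(a+b)\le (a+b)/4\le h_K/2$, so $2ab/(a+b)\le h_K$. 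That yields exactly $\operatorname{conv}_K(g)(p)\le g(p)+\sL_g h_K$, which completes the plan.
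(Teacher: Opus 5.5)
Your final argument is correct and follows the paper's proof: you use the two-point interpolation weights between the bracketing anchors, apply the Lipschitz bound, and close with $2ab/(a+b)\le (a+b)/2\le h_K$. That closing inequality is exactly the justification the paper's last step leaves implicit, and it is genuinely needed because one of $a$, $b$ can exceed $h_K$. You should delete the intermediate detour, though: the claim that $2ab/(a+b)\le\min(a,b)$ holds is false (the harmonic mean is at least the minimum), and the nearest-anchor and tent-example remarks play no role in the proof.
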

\begin{proof}
    For any $p \in [\ell, u]$, there exist consecutive anchor points $\evp{k}, \evp{k+1}$ such that $\evp{k} \leq p \leq \evp{k+1}$. We can write $p$ as a convex combination of those anchor points, i.e. $p = \lambda \evp{k} + (1-\lambda) \evp{k+1}$ by taking $\lambda = \frac{\evp{k+1}-p}{\evp{k+1}-\evp{k}} \in [0,1]$. Thus, the vector $\bm{\lambda}=[0,\dots,0,\lambda, 1-\lambda, 0,\dots,0]$ (i.e., non-zero at positions $k$ and $k+1$) belongs to $\Delta_K$ and satisfies the constraint in \eqref{eq:conv-envelope-approximation-definition}, thus
    \begin{equation}\label{eq:proof-aux-lemma-6-1-1}
        \operatorname{conv}_K(g)(p) \leq \lambda g(\evp{k}) + (1-\lambda) g(\evp{k+1}).
    \end{equation}
    Let $h_K = \max_{p\in[\ell, u]} \min_{k\in[K]} |p - \evp{k}|$. Then,
    \begin{align}
        \lambda g(\evp{k}) + (1-\lambda) g(\evp{k+1}) &\leq \lambda g(p) + (1-\lambda) g(p) + \lambda | g(\evp{k}) - g(p)| + (1-\lambda) | g(\evp{k+1}) - g(p)| \nonumber \\
        &\leq g(p) + L_g h_K.\label{eq:proof-aux-lemma-6-1-2}
    \end{align}
    where $L_g$ is the Lipschitz constant of $g$ in $[\ell,u]$. Thus, from \eqref{eq:proof-aux-lemma-6-1-1} and \eqref{eq:proof-aux-lemma-6-1-2}, we have that, for any $p \in [\ell, u]$,
    \begin{equation*}
        \operatorname{conv}_K(g)(p) \leq g(p) + L_g h_K.
    \end{equation*}
\end{proof}

We are now ready to prove Proposition~\ref{prop:prob_bound_with_anchor_points}.
\begin{proof}
    The goal of Proposition~\ref{prop:prob_bound_with_anchor_points} is to bound the following optimization problem.
    \begin{equation}\label{eq:proof-bound-union-sum-non-convex}
        \inf_{\substack{p_{\sV_m} \in [\ubar{p}_{\sV_m}, \bar{p}_{\sV_m}] \\ \sum_{m=1}^M p_{\sV_m}=1}} \sum_{l \in \sL} \sum_{s=1}^{N} \frac{s}{N} \sum_{j=s-\floor{\alpha s}}^{s} f_{s,j} \left( \frac{\ubar{p}_{\sR_l}}{p_{\sV_l}} \right) f_{N,s} \left(p_{\sV_l}\right).
    \end{equation}
    Due to the non-convexity of \eqref{eq:proof-bound-union-sum-non-convex}, we will search for a tractable convex lower-bound. First, note that the objective function is separable in $p_{\sV_l}$ as a sum of functions $g_l$, which allows the objective in \eqref{eq:proof-bound-union-sum-non-convex} to be written as $G(\bm{p}) = \sum_{l \in \sL} g_l(p_{\sV_l})$, where $\bm{p} = \{p_{\sV_l}\}_{l \in \sL}$. For each component $l$, select $K$ anchor points:
    \begin{equation}
        p_{\sV_l}^{(1)} < p_{\sV_l}^{(2)} < \cdots < p_{\sV_l}^{(K)}, 
        \quad p_{\sV_l}^{(k)} \in [\ubar{p}_{\sV_l}, \bar{p}_{\sV_l}],
    \end{equation}
    and evaluate $g_l^{(k)} := g_l(p_{\sV_l}^{(k)})$ at each anchor. Then, let 
    \begin{equation}
        \operatorname{conv}_K(g_l)(p) = \min_{\bm{\lambda} \in \Delta_K} \left\{ \sum_{k=1}^K \evlambda{k} g_l(\evp{k}) \text{ s.t. } \sum_{k=1}^K \evlambda{k} \evp{k} = p \right\}.
    \end{equation}
    From Lemma \ref{lemma:lower-bound-as-lp}, it holds that
    \begin{equation}
        \operatorname{conv}_K(g_l)(p_{\sV_l}) \leq g_l(p_{\sV_l}) + \sL_{g_l} h_{l, K},
    \end{equation}
    where $L_{g_l}$ is the Lipschitz constant of the function $g_l$ in the interval $[\ubar{p}_{\sV_l}, \bar{p}_{\sV_l}]$ and the maximum interval size $h_{l, K} = \max_{p \in[\ubar{p}_{\sV_l}, \bar{p}_{\sV_l}]} \min_{k\in[K]} |p - p_{\sV_l}^{(k)}|$. Therefore,
    \begin{equation}
        \inf_{\substack{p_{\sV_m} \in [\ubar{p}_{\sV_m}, \bar{p}_{\sV_m}] \\ \sum_{m=1}^M p_{\sV_m}=1}} \sum_{l \in \sL} \operatorname{conv}_K(g_l)(p_{\sV_l}) \leq \inf_{\substack{p_{\sV_m} \in [\ubar{p}_{\sV_m}, \bar{p}_{\sV_m}] \\ \sum_{m=1}^M p_{\sV_m}=1}} \sum_{l \in \sL} g_l(p_{\sV_l}) + \sum_{l \in \sL} L_{g_l} h_{l, K}.
    \end{equation}
    As the $\bm{\lambda}$ variables within the definition of each $\operatorname{conv}_K(g_l)(p_{\sV_l})$ are independent from each other, we can write the left-hand side term as the following LP problem
    \begin{align}
    \inf_{\substack{
        p_{\sV_m} \in [\ubar{p}_{\sV_m}, \bar{p}_{\sV_m}] \\
        \sum_{m=1}^M p_{\sV_m}=1}}
    \sum_{l \in \sL} \operatorname{conv}_K(g_l)(p_{\sV_l})
    &=
    \inf_{\bm{p},\,\bm{\lambda}_1,\dots,\bm{\lambda}_{\lvert \sL \rvert }}
        \;\sum_{l \in \sL} \sum_{k=1}^K \lambda_l^{(k)} g_l(p_{\sV_l}^{(k)})
    \\[0.75em]
    &\qquad\quad\text{s.t.}\quad
    \left\{
    \begin{aligned}
    & p_{\sV_m} \in [\ubar{p}_{\sV_m}, \bar{p}_{\sV_m}] ,\\
    & \sum_{m=1}^M p_{\sV_m} = 1 ,\\
    & \sum_{k=1}^K \lambda_l^{(k)} p_{\sV_l}^{(k)} = p_{\sV_l} ,\\
    & \bm{\lambda}_l \in \Delta_K .
    \end{aligned}
    \right.
    \end{align}
    To conclude, we apply Lemma~\ref{lemma:computing-lipschitz-constant} to obtain
    \begin{equation*}
        L_{g_l} \leq \frac{N}{1 - \bar{p}_{\sV_l}} + \frac{N}{\ubar{p}_{\sV_l} - \ubar{p}_{\sR_l}}.
    \end{equation*}
\end{proof}

\begin{lemma}\label{lemma:computing-lipschitz-constant}
Let $g:(0,1]\to\realNum$ be the function defined by
\begin{equation*}
    g(p)=\sum_{s=1}^{N} \frac{s}{N} 
\sum_{j=s-\floor{\alpha s}}^{s}
\binom{s}{j}\left(\frac{\tilde p}{p}\right)^j
\left(1-\frac{\tilde p}{p}\right)^{s-j}
\binom{N}{s}p^s(1-p)^{N-s},
\end{equation*}
where $\alpha \in [0,\frac{1}{2})$, $N \in \natNum_{>0}$, and $0 < \tilde p < \ubar{p} < \bar{p} < 1$. Then $g$ is Lipschitz continuous on $[\ubar{p},\bar{p}]$ with Lipschitz constant
\begin{equation}
    L_g \leq \frac{N}{1-\bar{p}} + \frac{N}{\ubar{p}-\tilde p}.
\end{equation}
\end{lemma}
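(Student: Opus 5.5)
We bound $|g'(p)|$ uniformly on $[\ubar{p},\bar{p}]$ and then use the mean value theorem. Since $g$ is a finite sum of smooth functions on $(\tilde p,1)$, it is differentiable there, and any bound $\sup_{p\in[\ubar p,\bar p]}|g'(p)|\le C$ gives $L_g\le C$.

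\textbf{Step 1: product structure.} Write $g(p)=\sum_{s=1}^N \frac{s}{N} A_s(p) B_s(p)$, where
\[
A_s(p)=\sum_{j=s-\floor{\alpha s}}^s f_{s,j}(q),\qquad q=\tilde p/p\in(0,1),\qquad B_s(p)=f_{N,s}(p).
\]
Then $g'=\sum_s \frac{s}{N}(A_s' B_s + A_s B_s')$. Both $A_s\in[0,1]$ and $\frac{s}{N}\le 1$ hold, and $\sum_s B_s\le 1$ because the $B_s$ are binomial masses.

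\textbf{Step 2: the $B_s'$ term.} Use $\frac{d}{dp}f_{N,s}(p)=f_{N,s}(p)\bigl(\frac{s}{p}-\frac{N-s}{1-p}\bigr)$. This gives
\[
\sum_s \frac{s}{N}A_s|B_s'| \le \sum_s f_{N,s}(p)\Bigl(\frac{s}{p}+\frac{N-s}{1-p}\Bigr).
\]
Taking binomial expectations gives $\frac{Np}{p}+\frac{N(1-p)}{1-p}=2N$. Since $2N$ is not of the target form, a finer argument is needed. The key identity is $B_s'(p)=N\bigl(f_{N-1,s-1}(p)-f_{N-1,s}(p)\bigr)$, which makes the sum telescoping in character.

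I would instead bound the $-\frac{N-s}{1-p}$ part by $\frac{N}{1-p}\le \frac{N}{1-\bar p}$. The positive part $\frac{s}{p}$ I would absorb together with the $A_s'$ term in Step 3, since both come from $p^s$ and from $q^j$ with $q\propto 1/p$.

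\textbf{Step 3: the $A_s'$ term.} We have $A_s(p)=\Pr(\text{Bin}(s,q)\ge s-\floor{\alpha s})$, and $\frac{dq}{dp}=-q/p$. The standard identity $\frac{d}{dq}\Pr(\text{Bin}(s,q)\ge k)=s f_{s-1,k-1}(q)\ge 0$ shows $A_s$ is decreasing in $p$. Its derivative therefore has the opposite sign to the positive contribution $\frac{s}{p}B_s$, and I expect partial cancellation.

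A direct bound goes as follows. Write $f_{s,j}(q)\,\frac{d}{dp}\log f_{s,j}(q)$ with
\[
\frac{d}{dp}\log f_{s,j}(q)=-\frac{j}{p}+\frac{(s-j)\tilde p}{p(p-\tilde p)}.
\]
Combined with the $\frac{s}{p}$ from $B_s$, the logarithmic derivative of each summand $q^j(1-q)^{s-j}p^s(1-p)^{N-s}=\tilde p^{\,j}(p-\tilde p)^{s-j}(1-p)^{N-s}$ is
\[
\frac{s-j}{p-\tilde p}-\frac{N-s}{1-p}.
\]
The cancellation is exact, since the $\frac{s}{p}$ and $-\frac{j}{p}$ parts have disappeared. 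This is the crucial simplification, and I would rewrite $g$ in this form from the start.

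Bounding the absolute values, with $s-j\le N$ and $N-s\le N$, each summand contributes at most $\bigl(\frac{N}{\ubar p-\tilde p}+\frac{N}{1-\bar p}\bigr)$ times itself. Summing the nonnegative summands reproduces $g(p)\le 1$, which gives
\[
|g'(p)|\le \frac{N}{1-\bar p}+\frac{N}{\ubar p-\tilde p}.
\]

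\textbf{Main obstacle.} The hard part is spotting the algebraic cancellation in Step 3; naive term-by-term bounding yields extra $N/p$ factors. The bound $g\le 1$ follows since $A_s\le 1$, $\frac{s}{N}\le 1$ and $\sum_s B_s\le 1$. The final step is the mean value theorem.
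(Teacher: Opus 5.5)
Your proof is correct, and at its key step it is more careful than the paper's. Both arguments bound $\sup_{[\ubar{p},\bar{p}]}|g'|$ summand by summand via logarithmic derivatives, and then use that the nonnegative weights sum to at most $1$. Where you differ is the factorization.

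The paper applies the product rule to the two factors $p^s(1-p)^{N-s}$ and $(\tilde{p}/p)^j(1-\tilde{p}/p)^{s-j}$ separately. It asserts that their logarithmic derivatives are bounded in absolute value by $N/(1-\bar{p})$ and $N/(\ubar{p}-\tilde{p})$ respectively.

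The second bound is fine, because the competing terms $j/p$ and $(s-j)\tilde{p}/(p(p-\tilde{p}))$ each lie below $N/(p-\tilde{p})$. The first bound is not valid in general. The logarithmic derivative of $p^s(1-p)^{N-s}$ is $s/p-(N-s)/(1-p)$. For $s=N$ and $p=\ubar{p}$ this equals $N/\ubar{p}$, which exceeds $N/(1-\bar{p})$ whenever $\ubar{p}+\bar{p}<1$. This already happens for $N=1$, $\alpha=0$: there $g\equiv\tilde{p}$, and the two factors have logarithmic derivatives $\pm 1/p$ that cancel only in the product.

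Your rewriting of the summand as $C_{s,j}\,\tilde{p}^{\,j}(p-\tilde{p})^{s-j}(1-p)^{N-s}$ makes that cancellation exact. It leaves the logarithmic derivative $\frac{s-j}{p-\tilde{p}}-\frac{N-s}{1-p}$, so the stated constant follows rigorously. A correct factor-wise argument would instead pick up an extra $N/\ubar{p}$ term. Because the two remaining terms have opposite signs, your argument even yields the sharper constant $\max\{N/(1-\bar{p}),\,N/(\ubar{p}-\tilde{p})\}$.

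When you write it up, you can drop the exploratory parts of Step 2 (the $2N$ bound and the telescoping identity) and the monotonicity discussion of $A_s$. The proof is just the rewriting, the logarithmic-derivative bound, $g\le 1$, and the mean value theorem.
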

\begin{proof}
Because $g$ is a sum of continuous and differentiable functions on $[\ubar{p},\bar{p}]$, its Lipschitz constant is given by $L_g=\sup_{p\in[\ubar{p},\bar{p}]}|g'(p)|$. For notational simplicity, we write the summand as
\begin{equation*}
    T_{s,j}(p)=C_{s,j}\, 
\left( \frac{\tilde p}{p} \right)^j 
\left( 1-\frac{\tilde p}{p} \right)^{s-j}
p^{s}(1-p)^{N-s},
\quad 
C_{s,j}=\binom{s}{j}\binom{N}{s} .
\end{equation*}

First, note that, for $p^{s}(1-p)^{N-s}$, $\Bigl|\tfrac{d}{dp}\bigl[p^{s}(1-p)^{N-s}\bigr]\Bigr|
\le p^{s}(1-p)^{N-s} \frac{N}{1-\bar{p}}$. Then, for $\bigl(\tilde p/p\bigr)^j\bigl(1-\tilde p/p\bigr)^{s-j}$, the absolute value of the derivative is at most $\bigl(\tilde p/p\bigr)^j\bigl(1-\tilde p/p\bigr)^{s-j} \frac{N}{\ubar{p}-\tilde{p}}$. Since all coefficients are nonnegative and sum to at most $1$, it holds that
\begin{equation*}
    |g'(p)|\le \frac{N}{1-\bar{p}}+\frac{N}{\ubar{p}-\tilde p},
\quad p\in[\ubar{p},\bar{p}],
\end{equation*}
which concludes the proof.
\end{proof}

\subsection{Proof of Proposition \ref{prop:high-confidence-set-construction}}

Before proving Proposition \ref{prop:high-confidence-set-construction}, we introduce the well-known Clopper-Pearson Lemma.

\begin{lemma}[Clopper-Pearson lower bound]\label{lemma:clopper-pearson}
    Let $\{ \randvar{z}_1,\dots,\randvar{z}_M \}$ be i.i.d. samples from $\Prob$, and let $h : \sX \to \sY$ and $\sR \subseteq \sY$ be given. Define $Z = \sum_{i=1}^M \indicator_{h(\randvar{z}_i) \in \sR}$. Given a confidence level $\beta \in [0,1]$, if $\check{p} \in [0,1]$ is such that $\beta = \sum_{j=Z}^M f_{M,j}(\check{p})$, then 
    \begin{equation}
        \Prob_{(\randvar{z}_1,\dots,\randvar{z}_M)} \left( \Prob_{\randvar{z}}\left( h(\randvar{z}) \in \sR \right) \geq \check{p} \right) \geq 1 - \beta
    \end{equation}
\end{lemma}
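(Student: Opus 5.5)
The plan is to reduce the statement to the classical tail bound for a binomial count. Set $p^\star := \Prob_{\randvar{z}}(h(\randvar{z}) \in \sR)$. Because the $\randvar{z}_i$ are i.i.d., $Z \sim \text{Binomial}(M, p^\star)$. For each $p\in[0,1]$ define the upper tail $G(k; p) := \sum_{j=k}^M f_{M,j}(p)$, which is the probability that a $\text{Binomial}(M,p)$ variable is at least $k$.

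\textbf{Step 1 (monotonicity).} I will first record that $G(k;p)$ is continuous and nondecreasing in $p$ for each fixed $k$. This follows from the standard identity
$$G(k;p) = \frac{M!}{(k-1)!\,(M-k)!}\int_0^p t^{k-1}(1-t)^{M-k}\,dt \quad\text{for } k\geq 1.$$
It also follows from a coupling argument: a larger success probability stochastically dominates. For $k=0$ we have $G\equiv 1$, and this degenerate case is handled by the convention $\check p = 0$ from the footnote to Proposition~\ref{prop:high-confidence-set-construction}. In every other case, $\check{p}$ is the (largest) root of $G(Z;\cdot)=\beta$, and so it is a function $\check p(Z)$ of the observed count. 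This function is nondecreasing in $Z$, because $G(k;p)$ decreases in $k$.

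\textbf{Step 2 (the failure event).} The failure event is $\{\check{p}(Z) > p^\star\}$. Let $k^\star := \min\{k : \check p(k) > p^\star\}$. By monotonicity of $\check p$ in $k$, the failure event equals $\{Z \geq k^\star\}$, so its probability is $G(k^\star; p^\star)$.

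\textbf{Step 3 (the bound).} Since $\check p(k^\star) > p^\star$ and $G(k^\star;\cdot)$ is nondecreasing, we get
$$G(k^\star;p^\star) \leq G(k^\star;\check p(k^\star)) = \beta.$$
Therefore $\Prob(\check p \leq p^\star) \geq 1-\beta$, which is the claim.

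The main obstacle is the careful handling of discreteness and ties: flat regions of $G$ and the choice of root when $\check p$ is defined implicitly. The cleanest route is to define $\check p(k) = \inf\{p : G(k;p)\geq \beta\}$ and to use right-continuity to get $G(k^\star;\check p(k^\star))\leq\beta$. In fact, for $k\ge1$, $G(k;\cdot)$ is strictly increasing on $[0,1]$, so the root is unique and this difficulty disappears.
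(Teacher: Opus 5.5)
Your proof is correct, and it is the standard validity argument for the one-sided Clopper--Pearson bound. You note that $Z\sim\mathrm{Binomial}(M,p^\star)$, and that the tail $G(k;p)$ is strictly increasing in $p$ for $k\ge 1$ and nonincreasing in $k$. Hence $\check{p}(k)$ is nondecreasing in $k$, and the failure event $\{\check{p}(Z)>p^\star\}$ is an upper tail $\{Z\ge k^\star\}$ with $k^\star\ge 1$. Its probability $G(k^\star;p^\star)$ is then at most $G(k^\star;\check{p}(k^\star))=\beta$.

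The paper gives no proof of this lemma. It introduces it as ``well-known'' and uses it directly in the proof of Proposition~\ref{prop:high-confidence-set-construction}, so your argument supplies exactly what the paper takes for granted. It also makes explicit two points the statement glosses over. First, $\check{p}$ is a random quantity, namely a monotone function of the observed count $Z$. Second, for $Z=0$ and $\beta<1$ no $\check{p}$ satisfies the defining equation, so the footnote convention $\check{p}=0$ is needed; it is harmless, since $\check{p}(0)=0\le p^\star$.

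One small slip is in your closing remark. If you define $\check{p}(k)=\inf\{p: G(k;p)\ge\beta\}$, right-continuity gives $G(k;\check{p}(k))\ge\beta$, which is the opposite of the inequality you want. You need no continuity there: $p^\star<\check{p}(k^\star)$ means $p^\star$ is not in the set, so $G(k^\star;p^\star)<\beta$ follows directly from the definition of the infimum. Since you also observe that the root is unique for $k\ge 1$, this does not affect the argument.
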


We are now ready to prove Proposition \ref{prop:high-confidence-set-construction}.
\begin{proof}
    From the Fréchet bound and Lemma \ref{lemma:clopper-pearson},
    \begin{align*}
        &\Prob_{ \{ (\randvar{w}_i,\varepsilon_i) \}_{i=1}^N } \left( \cap_{m=1}^M \left\{ \check{p}_{\sV_m} \leq \Prob_{(\randvar{w},\varepsilon)}\left( h_\randvar{w}(x+\varepsilon) \in \sV_m \right) \leq \hat{p}_{\sV_m}, \check{p}_{\sR_m} \leq \Prob_{(\randvar{w},\varepsilon)}\left( h_\randvar{w}(x+\varepsilon) \in \sR_m \right) \right\} \right) \\
        &\qquad \geq \sum_{i=1}^{3M} \left( 1-\frac{\beta}{3M} \right) - (3M-1) = 1 - \beta.
    \end{align*}
\end{proof} 

\section{Additional Experimental Results}\label{section:additional-experimental-results}

\subsection{Trajectory Prediction}
We examine a specific input instance from the dataset and demonstrate that the introduction of a small input perturbation $\bm{\varepsilon} \sim \sN(0, I)$ increases the behavioral variability of the target agent by raising the probability of the tactical behavior \emph{go}. While $\alpha$-smoothing tends to suppress the \emph{go} mode -- thereby suggesting that the target vehicle is likely to yield -- our clustered $\alpha$-trimming preserves the predicted multi-modality of $h_\randvar{w}(x+\bm{\varepsilon})$, \review{as illustrated in} Figure \ref{fig:trajectory-prediction-samples-experiment}, \review{where we show the last time step of the sampled trajectories}.
\begin{figure*}[h]
    \centering
    \includegraphics[width=1.0\textwidth]{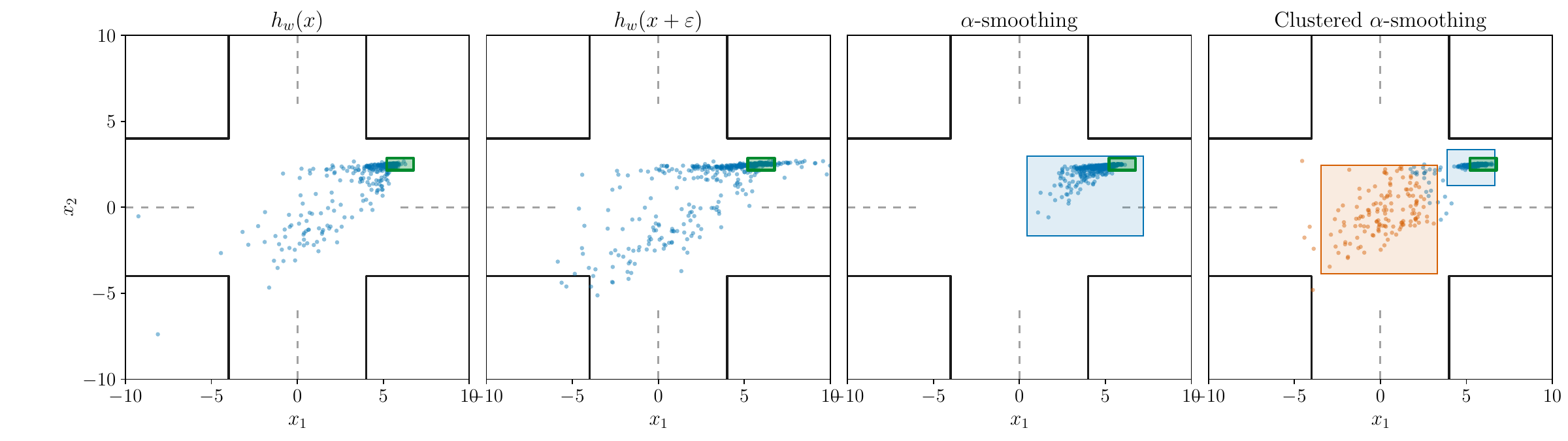}
    \vspace{-0.25cm}
    \caption{Given a particular trajectory input $x$ from the L-GAP dataset \cite{zgonnikov_should_2024}, we show \review{the last time step of trajectory samples} from $h_\randvar{w}(x)$, $h_\randvar{w}(x+\bm{\varepsilon})$, where $\bm{\varepsilon} \sim \sN(0, I)$, the $\alpha$-smoothing in \eqref{eq:def-alpha-smoothing}, and our clustered $\alpha$-smoothing. We also show \review{the projection to the last time step subspace of the sets} $\sR_m$ produced with Algorithm \ref{alg:sets-construction} for coverage $p=0.8$.}
    \label{fig:trajectory-prediction-samples-experiment}
    \vspace{-0.5cm}
\end{figure*}

\section{Summary of method parameters}\label{section:parameters}
The clustered $\alpha$-smoothing certificate is parameterized by three groups. The clustering group depends on the strategy. For the purposes of the experiments in Section~\ref{section:experiments}, we use DBSCAN. 
The parameters are listed in Table~\ref{tab:parameter_overview}.

\begin{table}[ht!]
\centering
\small
\caption{Clustered $\alpha$-smoothing parameters.}\label{tab:parameter_overview}
\begin{tabular}{llp{0.55\linewidth}}
\toprule
Parameter & Symbol & Impact \\
\midrule
\multicolumn{3}{l}{\emph{Clustering (DBSCAN strategy)}} \\
Maximum number of clusters & $M_{\max}$ & If the output of DBSCAN has more than $M_{\max}$ clusters, then we merge them in order of fewest to most samples. \\
Coverage  & $p$        & See Line 9 of Algorithm~\ref{alg:sets-construction} \\
Neighborhood radius & $\varepsilon_\text{db}$ & DBSCAN neighbourhood radius in output space. \\
Min. samples & --      & DBSCAN core-point threshold. \\
\midrule
\multicolumn{3}{l}{\emph{Smoothing core}} \\
Std.dev. of Gaussian input-perturbation     & $\sigma$     & Larger std.dev. allows larger perturbation radius $r$, at the cost of larger deviation from $h_\randvar{w}(x)$. See \eqref{eq:adjustment-given-radius}. \\
Number of samples for smoothing & $N$    & More samples tightens the modes of the smoothed distribution. \\
Trimming fraction & $\alpha$      & The certificate discards the most extreme $2\alpha$ tails of each per-cluster output coordinate, $0 \le \alpha < 1/2$. \\
\midrule
\multicolumn{3}{l}{\emph{Confidence}} \\
Number of samples for confidence & $\bar{N}$    & Number of samples used for Proposition~\ref{prop:high-confidence-set-construction}. More samples leads to better confidence, at an increased computational cost. \\
Confidence parameter & $\beta$ & The joint confidence for all estimated probability bounds is $1 - \beta$. \\
\bottomrule
\end{tabular}
\end{table}

\section{Experiment Details}\label{section:experiment_details}
\subsection{Trajectory Prediction}\label{subsection:trajectory_prediction_details}

\paragraph{\emph{TrajFlow} architecture and training procedure.} 
\emph{TrajFlow}~\cite{meszaros_trajflow_2024} is built with a combination of a normalizing flow network $F$ with a recurrent autoencoder, consisting of encoder $E_{\text{RNN}}$ and decoder $E_{\text{RNN}}$ (see Figure~\ref{fig:trajflow_architecture}). Furthermore, the model is completed with the networks $\phi_{\text{RNN}}$ (a gated recurring unit encoding the predicted agents past trajectory), $\psi_{\text{GNN}}$ (a graph neural network encoding the other agents), and $\phi_{\text{CNN}}$ (a convolutional network encoding the environment, which is given as a bird's-eye image). 
This model is trained in a two-stage process, where first the recurrent autoencoder is trained to encode the future trajectories of all involved agents. Once done, the normalizing flow is trained on the encoded future sampled as well as the past and environment information. An detailed description of the model components and the training loss can be found in the original work ~\cite{meszaros_trajflow_2024}.

\begin{figure}
    \centering
    \includegraphics[width=0.9\linewidth]{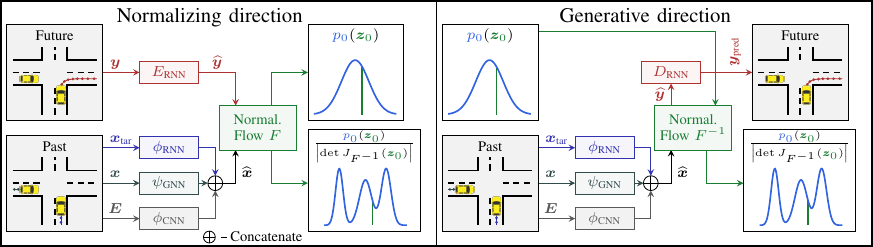}
    \caption{An overview of \emph{TrajFlow}'s architecture, taken from Figure~2 in ~\cite{meszaros_trajflow_2024}. During training, future trajectories $\bm{y}$ are encoded with the encoder $E_{\text{RNN}}$ and transformed to the abstracted features $\widehat{\bm{y}}$. Using the normalizing direction, they are then transformed into a sample $\bm{z}_0 = F(\widehat{\bm{y}})$, which follows a standard normal distribution $p_0$.
    For inference we then use the generative direction, in which a sample $\bm{z}_0 \sim p_0$ is inversely transformed by the Normalizing Flow to generate the abstracted future trajectories $\widehat{\bm{y}} = F^{-1}(\bm{z}_0)$ that are decoded with $D_{\text{RNN}}$ into the actual trajectories $\bm{y}_\text{pred}$.
    The likelihood of the encoded trajectory $\widehat{\bm{y}}$ is obtained with $p_0 (\bm{z}_0) \vert \det J_{F^{-1}}(\bm{z}_0)\vert^{-1}$. 
    The encoding $\phi_{\text{CNN}}$ of map $E$, and the encoding $\psi_{\text{GNN}}$ of social interactions are optional blocks, which can provide richer context information.}
    \label{fig:trajflow_architecture}
\end{figure}

For model training, we employ the \emph{STEP} framework~\cite{schumann2025step} (MIT License) to train \emph{TrajFlow} for our chosen scenario. In particular, the model is trained on a dataset combining \emph{NuScenes}~\cite{caesar_nuscenes_2020} (CC BY-NC-SA) and \emph{L-GAP}~\cite{zgonnikov_should_2024} (CC-BY). Following previous works~\cite{schumann2025realistic,schumann2026evaluating}, we do not use the standard input for \emph{NuScenes} (i.e., 4 past time steps at a frequency of $\SI{2}{Hz}$), but instead use \emph{STEP} to extract past trajectories with 10 time steps at $\SI{10}{Hz}$. This choice is mostly motivated by the fact that this results in a smoother trajectory, for which we can extract the underlying control states much easier. Importantly, while these datasets provide trajectories with states including positions, velocities and accelerations, for \emph{TrajFlow}, we only consider the recorded positions. The code is available at 
\url{https://github.com/DAI-Lab-HEReALD/General-Framework/blob/main/Framework/simulations_randomized_smoothing.py}.


\paragraph{Behavioral clustering.} For our experiments, \review{we run Algorithm \ref{alg:sets-construction} with a behavior-based two-cluster heuristic in endpoint space, using prior knowledge of two tactical modes (\emph{stay} and \emph{go}). For each sample, let $\Delta = y_{\mathrm{end}} - y_{\mathrm{init}}$. A sample is tagged as \emph{stay} when $\lVert \Delta \rVert_2 \leq r_{\mathrm{stay}}$, and as \emph{go} when $\Delta^{(2)} \leq -\delta_{\downarrow}$ and $\Delta^{(1)} \le -\delta_{\leftarrow}$, excluding already-tagged \emph{stay} samples. Endpoint prototypes are then built from these tagged sets (with fallback rules if one set is empty), and all samples are assigned to the nearest prototype in $2$D endpoint space. Finally, for each cluster $j \in {1,2}$, we construct a tight axis-aligned hyperrectangle $\sR_j$ in the full trajectory space ($40$D) that covers a fraction $p$ of the samples assigned to that cluster.}

\subsection{Quadrotor Control}\label{subsec:quadrotor_details}
The quadrotor control benchmark was first described in \cite{badings2023robust}. We use it solely to obtain a safety-critical, stochastic, multi-modal closed-loop system on which to perform clustered randomized smoothing. Relative to \cite{badings2023robust}, we augment the system with a reward structure and extended termination criteria. 

\paragraph{Environment.}
The state is $s_k = [x_k, \dot{x}_k, y_k, \dot{y}_k, z_k, \dot{z}_k] \in \realNum^6$ and the action $u_k = [u^x_k, u^y_k, u^z_k] \in [-4, 4]^3$. The dynamics are a per-axis double integrator with unit time step,
\begin{equation}
    s_{k+1} = A s_k + B u_k + C w_k, \qquad w_k \sim \mathcal{N}(0, \sigma^2 I)
\end{equation}
with velocities clipped to $[-7, 7]$ after each update. In particular, per axis the dynamics are $x_{k+1} = x_k + \tau \dot{x}_k + \tfrac{1}{2} \tau^2 u^x_k$ and $\dot{x}_{k+1} = \dot{x}_k + \tau u^x_k + w_k$ where $w_k \sim \mathcal{N}(0, \sigma^2)$. The corridor is $[-15,15] \times [-9,9] \times [-7,7]$, the goal region is $[11,15] \times [1,5] \times [-7,-3]$, and 14 axis-aligned hyperrectangular obstacles match \cite{badings2023robust}. The termination criteria are \emph{in order}:
\begin{enumerate}
    \item if $[x_k, y_k, z_k]$ is outside the corridor,
    \item if \emph{any} point on the line segment between $[x_k, y_k, z_k]$ and $[x_{k+1}, y_{k+1}, z_{k+1}]$ touches an obstacle,
    \item if $[x_k, y_k, z_k]$ is inside the goal region, or
    \item if the max episode length is reached. 
\end{enumerate}
The parameters of the environment are summarized in Table~\ref{tab:quadrotor_env_parameters}.

\begin{table}[ht!]
\centering
\small
\caption{Environment parameters. $\Delta\text{dist}$ denotes the change in distance $\|\text{pos} - \text{goal}\|$ to the goal center $\text{goal} = (13, 3, -5)$ between subsequent steps.}\label{tab:quadrotor_env_parameters}
\begin{tabular}{lll}
\toprule
Quantity & Value & Notes \\
\midrule
State / action dim & 6 / 3 & \\
Action bounds & $[-4, 4]^3$ & matches JAIR benchmark \\
Process noise $\sigma$ & $0.05$ & on velocity coords; small enough that obstacles dominate \\
Velocity clip & $[-7, 7]$ & prevents PPO blow-up under exploration \\
Max steps / episode & $64$ & matches rollout horizon \\
Reward (goal) & $+10$ & terminal \\
Reward (collision / exit) & $-5$ & terminal \\
Reward (timeout) & $-0.05 \cdot \|\text{pos} - \text{goal}\|$ & distance-shaped: far timeouts penalised more \\
Reward (alive) & $0.5 \cdot \Delta\text{dist} - 0.01$ & dense progress shaping \\
State normalization & $s / [15,7,9,7,7,7]$ & per-dim rescale for the policy net \\
\bottomrule
\end{tabular}
\end{table}

\paragraph{Policy and value networks.}
The policy is a tanh-squashed Gaussian mixture: at state $s$, sample a mode $z \sim \mathrm{Cat}(\mathrm{softmax}(g_\phi(s)))$ where $\mathrm{Cat}$ is a categorical distribution, sample an unscaled action from this mode $r \sim \mathcal{N}(\mu_\theta^z(s), \mathrm{diag}\, e^{2 \log \sigma_z})$, and rescale the action $a = \tanh(r) \cdot a_\text{scale}$. The mixture is results in a multi-modal policy, which indeed requires \emph{clustered} $\alpha$-smoothing. The architecture is summarized in Table~\ref{tab:quadrotor_arch_parameters}.

\begin{table}[ht!]
\centering
\small
\caption{Network details. $A$ denotes the action dimension.}\label{tab:quadrotor_arch_parameters}
\begin{tabular}{ll}
\toprule
Component & Specification \\
\midrule
Trunk & 2 $\times$ Linear(128) + tanh, orthogonal init (gain $\sqrt{2}$) \\
Mean head & Linear($K \cdot A$), orthogonal init gain $0.01$ \\
Mixture head ($K > 1$) & Linear($K$), orthogonal init gain $0.01$ \\
Log-std & global per-component $[K, A]$, init $-1.2$, clamped to $[-5, 2]$ \\
Critic & identical 2 $\times$ Linear(128) + tanh trunk, scalar head \\
$K$ (mixture components) & $2$ \\
$a_\text{scale}$ & $4.0$ (matches action bound; tanh saturation gives the true control limit) \\
\bottomrule
\end{tabular}
\end{table}

\paragraph{Training.}
The policy is trained using standard clipped PPO with GAE and a curriculum on the start state. The curriculum using a linear schedule where a scalar $f \in [0, 1]$ linearly anneals from $0$ to $1$ over the first $40{,}000$ episodes. At each rollout we sample a valid (collision-free, outside the goal) start position uniformly inside an axis-aligned box around the goal center with half-width $r = 3 + 25 f$, and initial velocities $\mathcal{N}(0, (0.1 + 0.4 f)^2)$. Without curriculum learning, the policy frequently terminates in collisions before it ever sees the goal. The remaining training parameters are listed in Table~\ref{tab:quadrotor_training_parameters}.

\begin{table}[ht!]
\centering
\small
\caption{PPO hyperparameters.}\label{tab:quadrotor_training_parameters}
\begin{tabular}{lll}
\toprule
Hyperparameter & Value & Notes \\
\midrule
Optimizer & Adam, lr $3 \times 10^{-4}$ & shared across actor and critic \\
Total episodes & $200{,}000$ & \\
Rollout horizon $T$ & $64$ steps & matches env timeout \\
Episodes per update & $32$ & $\approx 2$k transitions / batch \\
PPO epochs / batch & $10$ & \\
Minibatch size & $512$ & \\
Discount $\gamma$ & $0.99$ & \\
GAE $\lambda$ & $0.95$ & \\
Clip range & $0.2$ & \\
Value loss coefficient & $0.5$ & \\
Entropy coefficient & $0.02$ & mild; the mixture itself adds exploration \\
Max grad norm & $0.5$ & global clip across actor + critic \\
Advantage normalization & per-batch (mean / std) & \\
\bottomrule
\end{tabular}
\end{table}

\paragraph{Randomized smoothed policy parameters.}
Figure~\ref{fig:quadrotor_env_and_rollout} is produced with the clustered $\alpha$-smoothing parameters in Table~\ref{tab:quadrotor_smoothing_parameters} below. $\alpha$-smoothing uses the same parameters, except for no clustering.

\begin{table}[ht!]
\centering
\small
\caption{Clustered $\alpha$-smoothing parameters for the quadrotor benchmarks.}\label{tab:quadrotor_smoothing_parameters}
\begin{tabular}{lll}
\toprule
Parameter & Value & Notes \\
\midrule
$\sigma$            & $0.05$ & \\
$N$                 & $30$ & \\
$\alpha$            & $0.4$ & \\
$M_{\max}$          & $3$ & \\
Coverage $p$        & $0.9$ & \\
Clustering alg.     & DBSCAN & \\
$\varepsilon_\text{db}$ & $0.45$ & Output-space DBSCAN radius. \\
\texttt{dbscan\_min\_samples} & $50$ & Core-point threshold relative. \\
$\bar{N}$                 & $4000$ & Samples for Proposition~\ref{prop:high-confidence-set-construction}. \\
$\beta$             & $10^{-2}$ & \\
\bottomrule
\end{tabular}
\end{table}

\section{Generalized Voronoi partition}\label{section:generalized_voronoi}

Given a collection of points $\bsC = \{ c_1,\dots, c_M \} \subset \sX$, the Voronoi partition $\tilde\bsV$ induced by $\bsC$ is defined\footnote{Although we introduce the Voronoi partition for the Euclidean distance, we must stress that it can be defined for any other metric.} as $\tilde\sV_i = \{ x \in \sX : \norm{x - c_i}_2 \leq \norm{x - c_j}_2, i\neq j \}$, i.e., the point $x$ belongs to the cell $\tilde\sV_i$ if it is closer to $c_i$ than to any other point $c_j$ in $\bsC$. The Generalized Voronoi partition extend this notion to sets, provided a shortest-distance measure from $x$ to a set $\sR \subseteq \sX$. We write
\begin{equation}\label{eq:distance-to-set-definition-appendix}
    d(x, \sR) = \inf_{r \in \sR}\, \lVert x - r \rVert_2
\end{equation}
for the (Euclidean) point-to-set distance, which equals $0$ if and only if $x \in \sR$ for closed $\sR$.

\begin{definition}[Generalized Voronoi partition]
For a collection $\bsR = \{\sR_1, \dots, \sR_M\}$ of pairwise disjoint closed sets in $\sX$, the generalized Voronoi cell associated with $\sR_i$ is given by
\[
    \sV_i = \big\{ x \in \sX : d(x, \sR_i) \le d(x, \sR_j)
        \text{ for all } j \ne i \big\},
\]
and $\bsV = \{\sV_i\}_{i=1}^{M}$ is the generalized Voronoi partition induced by $\bsR$.
\end{definition}

Unlike $\tilde\bsV$, the cells $\sV_i$ in the generalized Voronoi partition $\bsV$ are generally \emph{not convex}, even when the sets $\sR_i$ are convex or simple shapes such as rectangles, as shown in Figure~\ref{fig:generalized_voronoi}. This fact, together with the combinatorial size of the partition that grows rapidly with both the number of sets $M$ and the dimension of the ambient space $\sX$, implies that computing the partition is almost certainly intractable. Indeed, even for the point-induced partition $\tilde\bsV$, the worst-case complexity in $\sX \subseteq \realNum^d$ is already $\Theta\left(M^{\lceil d/2 \rceil}\right)$.  When the sets $\sR_i$ are convex, the cell boundaries are $(d-1)$-dimensional algebraic surfaces, and for arbitrary compact sets the topology can be unbounded. Explicit construction of the partition is, therefore, intractable in $\realNum^d$ beyond toy examples. Crucially, however, our procedure does not require the explicit construction of the partition, but only identifying to which cell $\sV_i$ a given point $x \in \sX$ belongs, which reduces to computing $M$ point-to-set distances \eqref{eq:distance-to-set-definition-appendix}. This admits a trivial implementation when, for instance, the sets $\sR_i$ are axis-aligned hyperrectangles. 

\begin{figure}
    \centering
    \includegraphics[width=0.6\linewidth]{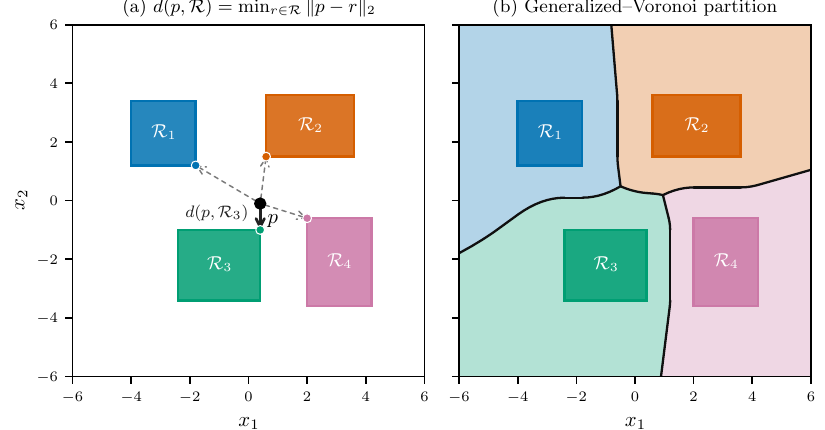}
    \caption{Generalized Voronoi partition of four disjoint axis-aligned rectangles $\sR_1, \dots, \sR_4 \subset \realNum^2$ under the Euclidean metric.
    (a) The distance from point $p$ to the closest point in each region $\sR_i$ is measured and $p$ is assigned $\sV_3$ as $\sR_3$ is the closest.
    (b) Each cell $\sV_i$ collects every point closer to $\sR_i$ than to any other site. Cell boundaries in the case of rectangles in 2D consist of straight segments and parabolic arcs and meet at \emph{Voronoi vertices} where three cells agree on
    the distance to a common point.}
    \label{fig:generalized_voronoi}
\end{figure}

\section{Quantization artifact in \texorpdfstring{$\alpha$}{alpha}-smoothing}\label{section:implicit_quantization}
As shown in Figure~\ref{fig:quadrotor_env_and_rollout}, $\alpha$-smoothing on bi-modal distributions unintuitively outputs a ``quantized'' smoothed prediction. The prediction concentrates on a small, near-uniformly spaced set of \emph{quantization levels} whose positions and weights are fully determined by the mixture and trimming parameters. We describe this artifact with the simplest non-trivial case, a 1-D two-component Gaussian mixture, which is enough to expose the mechanism. The same reasoning extends mode-by-mode to mixtures with more components or to higher-dimensional outputs (where each axis is quantized independently after coordinate-wise sorting).

\subsection{Setup}
Let the output distribution be the bi-modal Gaussian mixture
\begin{equation}\label{eq:gmm}
  p(y) = \pi_A \mathcal{N}(y \mid \mu_A, \sigma^2) + \pi_B \mathcal{N}(y \mid \mu_B, \sigma^2), \qquad \pi_A + \pi_B = 1,
\end{equation}
with $\mu_A < \mu_B$ and noise scale $\sigma$ small relative to the mode gap $\mu_B - \mu_A$. At a fixed smoothed input, the predictor draws $N$ samples $Y_1, \dots, Y_N \stackrel{\text{i.i.d.}}{\sim} p$, sorts them as $Y_{(1)} \le \dots \le Y_{(N)}$, and returns the symmetric $\alpha$-trimmed mean
\begin{equation}\label{eq:trimmed}
  \hat{\mu}_\alpha = \frac{1}{M}\sum_{i = L+1}^{N - L} Y_{(i)}, \qquad L = \lfloor \alpha N \rfloor, \quad M = N - 2L.
\end{equation}
Each Monte Carlo realization of the smoothed predictor is one independent draw of $\hat{\mu}_\alpha$. We call the set $\{Y_{(L+1)}, \dots, Y_{(N-L)}\}$ the trimmed core.

\subsection{Quantization in the noiseless limit}
Let $K$ be the (random) number of samples drawn from mode~$A$. By construction $K \sim \mathrm{Binom}(N, \pi_A)$. In the noiseless limit $\sigma \to 0$ the modes are perfectly separated, so after sorting the first $K$ values come from mode~$A$ and the remaining $N - K$ from mode~$B$. The trimmed core then contains $c_A(K) = \mathrm{clip}\left(K - L, 0, M\right)$ values from mode $A$ and $c_B(K) = M - c_A(K)$ from mode $B$. The trimmed mean then reduces to
\begin{equation}
  \hat{\mu}_\alpha^{\sigma=0}(K) = \frac{c_A(K)\mu_A + c_B(K)\mu_B}{M}.
\end{equation}
Since $c_A$ takes only integer values $0, 1, \dots, M$, the predictor collapses onto $M+1$ discrete \emph{quantization levels}
\begin{equation}\label{eq:levels-noiseless}
    \ell_j = \frac{j \mu_A + (M-j) \mu_B}{M}, \qquad j = 0, 1, \dots, M,
\end{equation}
spaced \emph{uniformly} between $\mu_A$ and $\mu_B$ at increments of $(\mu_A - \mu_B)/M$. Each level carries the binomial probability that $K$ takes a value mapping to it,
\begin{equation}
  \pi_j = \begin{cases}
     \Pr[K \le L] & j = 0,\\[2pt]
     \Pr[K = L + j] & j = 1, \dots, M-1,\\[2pt]
     \Pr[K \ge N - L] & j = M.
   \end{cases}
   \label{eq:level-weights}
\end{equation}
The two boundary levels absorb the tails of $\mathrm{Binom}(N, \pi_A)$
because any $K \le L$ saturates $c_A(K) = 0$ and any $K \ge N - L$ saturates
$c_A(K) = M$.

\paragraph{Addressing the failure mode of $\alpha$ smoothing.}
Increasing $N$ and decreasing $\alpha$ help alleviate the behavior by enlarging the trimmed core, and as $N \to \infty$ or $\alpha \to 0$, the distribution approaches a continuous distribution. However, both are problematic. Decreasing $\alpha$ means including more outliers, and increasing $N$ collapses the distribution to a tight peak around the mean, which is even more disjoint from the individual modes. 
Our clustered $\alpha$-smoothing avoids the artifact by summarizing each mode with its own trimmed mean and encoding the binomial split in the partition weights $\pi^{(m)}$ instead of in the level positions.

\subsection{Order-statistic correction at finite \texorpdfstring{$\sigma$}{sigma}}
\label{app:os-correction}

The noiseless scenario highlights the origin of the quantization behavior, but neglects a bias in the level positions for $\sigma > 0$. The mode-$A$ samples that survive trimming are not arbitrary mode-$A$ draws but specific order statistics of the $K$ mode-$A$ samples (those closer to mode-$B$ after sorting), and analogously for mode~$B$. Both contributions are biased \emph{toward} the gap between the modes.

Conditioning on $K = k$, the $i$-th overall order statistic is
\begin{equation}
  Y_{(i)} \mid K = k
   \;\stackrel{d}{=}\;
   \begin{cases}
     A_{(i:k)}, & i \le k,\\
     B_{(i-k:\,N-k)}, & i > k,
   \end{cases}
\end{equation}
where $A_{(j:n)}$ and $B_{(j:n)}$ denote the $j$-th order statistic among $n$ i.i.d.\ draws from the corresponding mode. Writing $\nu_{j:n} = \mathbb{E}[Z_{(j:n)}]$ for the expected $j$-th order statistic of $n$ standard normals, the conditional expected trimmed mean is
\begin{equation}
  \mathbb{E}\left[\hat{\mu}_\alpha \mid K = k\right] =  \frac{1}{M}\left[c_A(k)\mu_A + c_B(k)\mu_B + \sigma\left(S_A(k) + S_B(k)\right)
   \right],
   \label{eq:cond-mean}
\end{equation}
with order-statistic correction terms
\begin{align}
  S_A(k) &= \sum_{i = L+1}^{\min(k, N-L)} \nu_{i:k}
           \quad\text{(0 if $k \le L$),}\\[2pt]
  S_B(k) &= \sum_{i = \max(L+1, k+1)}^{N-L} \nu_{i-k:N-k}
           \quad\text{(0 if $k \ge N-L$).}
\end{align}
Because the expected order statistic is monotone in its rank, $S_A(k)$ is non-negative (we are averaging the \emph{upper} order statistics of the mode-$A$ sub-sample whenever $k > L$) and $S_B(k)$ is non-positive (lower order statistics of mode~$B$). Both shifts pull
$\mathbb{E}[\hat{\mu}_\alpha \mid K = k]$ inward, i.e., toward the middle of
the gap. The shifts are equal and opposite when $c_A(k) = c_B(k) = M/2$, so
the central level $\ell_{M/2}$ remains at $(\mu_A + \mu_B)/2$.

The marginal expected level for the $j$-th quantization bin is the binomial
weighted average of \eqref{eq:cond-mean} over the $k$ values mapping to that
bin (a single $k$ for interior bins, a tail of values for the two boundary
bins).

\subsection{Worked example: \texorpdfstring{$N = 30$, $\alpha = 0.4$, $\pi_A = 0.4$}{N=30, alpha=0.4, piA=0.4}}
With these parameters, the trimmed core is $M = 6$ samples, giving $M + 1 = 7$ quantization bins. The bins are nearly uniformly spaced; the order-statistic correction compresses the outer bins inward by roughly $0.22 \approx 0.7\sigma$ and leaves the center bin invariant. Most of the probability mass concentrates at the boundary level closest to $\mu_B$ because $\mathbb{E}[K] = N\pi_A = 12 = L$, placing the median of the
binomial split exactly at the lower trimming boundary.

Figure~\ref{fig:quantization} shows the underlying mixture (top) alongside the empirical histogram of $\hat{\mu}_\alpha$ over $20{,}000$ Monte Carlo realizations (bottom). The dashed lines mark the corrected levels $\bar{\ell}_j$ and labels report the binomial weights $\pi_j$ from \eqref{eq:level-weights}. Both line up tightly with the empirical peaks, confirming \eqref{eq:levels-noiseless}-\eqref{eq:cond-mean}.


\begin{figure}
  \centering
  \includegraphics[width=0.7\linewidth]{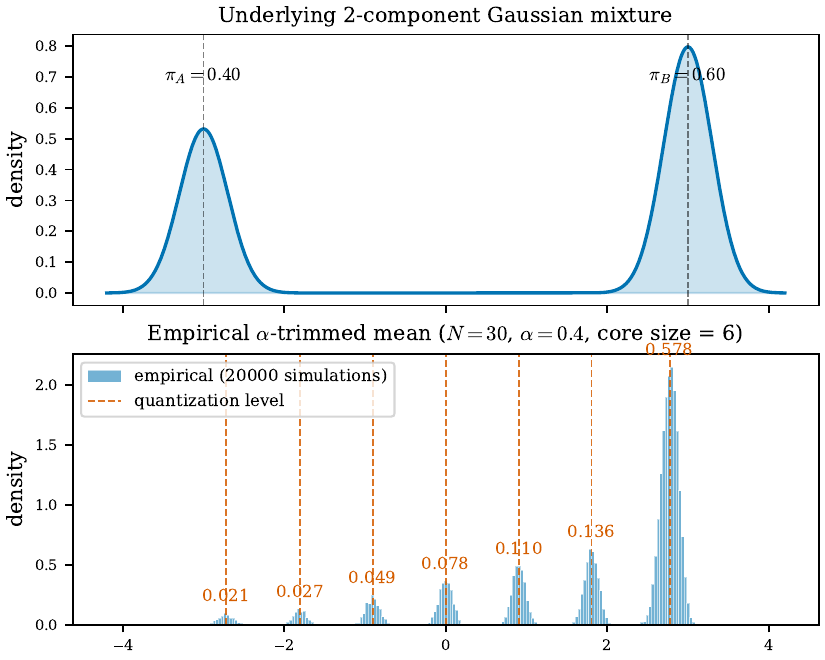}
  \caption{Quantization of the single-cluster $\alpha$-trimmed predictor on
  a bimodal output. \textbf{Top:} the underlying mixture
  $0.4\mathcal{N}(-3, 0.3^2) + 0.6\,\mathcal{N}(+3, 0.3^2)$.
  \textbf{Bottom:} histogram of $\hat{\mu}_\alpha$ for $N = 30$, $\alpha = 0.4$
  over $20{,}000$ realizations. Dashed lines are the predicted quantization
  levels $\bar{\ell}_j$ from \eqref{eq:cond-mean}; labels are the binomial
  weights $\pi_j$ from \eqref{eq:level-weights}.}
  \label{fig:quantization}
\end{figure}

\section{Control of the residual in Proposition \ref{prop:prob_bound_with_anchor_points}}\label{section:residual-analysis-prop-appendix}
One can achieve a desired residual level $\epsilon > 0$ by considering a uniform partition of the interval $[\ubar{p}_{\sV_l}, \bar{p}_{\sV_l}]$ into $K > \max_{l \in \sL} 1 + \frac{\lvert \sL \rvert}{\epsilon} (\bar{p}_{\sV_l}-\ubar{p}_{\sV_l}) \left[\frac{N}{1 - \bar{p}_{\sV_l}} + \frac{N}{\ubar{p}_{\sV_l} - \ubar{p}_{\sR_l}}\right]$ anchor points. Then, it holds that $\sum_{l \in \sL} \left[\frac{N}{1 - \bar{p}_{\sV_l}} + \frac{N}{\ubar{p}_{\sV_l} - \ubar{p}_{\sR_l}}\right] h_{l, K} \leq \epsilon$.


\end{document}